\definecolor{incolor}{RGB}{210,220,230}
\definecolor{outcolor}{RGB}{235,215,215}
\newtheorem{theorem}{Theorem}
\newtheorem{corollary}{Corollary}
\theoremstyle{plain}
\newtheorem{example}{Example}
\newcommand{\I}{I}
\newcommand{\ra}{\rightarrow}
\newcommand{\lora}{\longrightarrow}
\newcommand{\set}[1]{\{#1\}}                      
\newcommand{\card}[1]{|{#1}|}                     
\newcommand{\tup}[1]{\langle #1\rangle}            
\newcommand{\limp}{\rightarrow}
\newcommand{\Mod}[1]{\mathit{Mod}}
\newcommand{\qedboxfull}{\vrule height 5pt width 5pt depth 0pt}
\newcommand{\types}{\ensuremath{\mathfrak{D}}\xspace}
\newcommand{\fold}{\ensuremath{\mathsf{FOL}(\types)}\xspace}
\newcommand{\ALC}{\ensuremath{\mathsf{ALC}}\xspace}
\newcommand{\ALCd}{\ensuremath{\mathsf{ALC}(\mathit{D})}\xspace}
\newcommand{\ALCH}{\ensuremath{\mathsf{ALCH}}\xspace}
\newcommand{\ALCHdd}{\ensuremath{\ALCH(\types)}\xspace}
\newcommand{\dlliteb}[1][\@empty]{\textit{DL-Lite}\ensuremath{_{\mathit{bool}}^{\ifx\@empty#1\else(\textit{#1})\fi}}\xspace}
\newcommand{\dlliteh}[1][\@empty]{\textit{DL-Lite}\ensuremath{_{\mathit{horn}}^{\ifx\@empty#1\else(\textit{#1})\fi}}\xspace}
\newcommand{\per}{\ldotp}
\newcommand{\AND}{\sqcap}
\newcommand{\OR}{\sqcup}
\newcommand{\NOT}{\neg}
\newcommand{\NOTnnf}{{\sim}}
\newcommand{\ALL}[2]{\forall #1 \per #2}
\newcommand{\SOME}[2]{\exists #1 \per #2}
\newcommand{\SOMET}[1]{\exists #1}
\newcommand{\UNDEF}[1]{#1\!\!\uparrow}
\newcommand{\ISA}{\sqsubseteq}
\newcommand{\dom}[1][\I]{\Delta^{#1}}  
\newcommand{\Int}[2][\I]{#2^{#1}}      
\newcommand{\INT}[2][\I]{(#2)^{#1}}    
\newcommand{\inter}[1][\I]{\tup{\dom[#1],\Int[#1]{\cdot}}}  
\newcommand{\univ}{\Delta}
\newcommand{\TBox}{T}
\newcommand{\ABox}{A}
\newcommand{\none}{\cv{none}\xspace}
\newcommand{\indoor}{\cv{indoor}\xspace}
\newcommand{\outdoor}{\cv{outdoor}\xspace}
\newcommand{\dmn}{DMN\xspace}
\newcommand{\drg}{DRG\xspace}
\newcommand{\drgs}{\drg{s}\xspace}
\newcommand{\drd}{DRD\xspace}
\newcommand{\drds}{\drd{s}\xspace}
\newcommand{\feel}{FEEL\xspace}
\newcommand{\sfeel}{S-\feel}
\newcommand{\dkb}{DKB\xspace}
\newcommand{\dkbs}{DKBs\xspace}
\newcommand{\idkb}{IDKB\xspace}
\newcommand{\cbridge}{C}
\newcommand{\tfol}{\tau}
\newcommand{\tfolb}[1]{\tau_{#1}}
\newcommand{\dt}{\mathit{M}}
\newcommand{\cv}[1]{\ensuremath{\mathtt{#1}}}
\newcommand{\func}[1]{\ensuremath{\mathsf{#1}}}
\newcommand{\attr}[1]{\ensuremath{\mathbf{#1}}}
\newcommand{\tname}{\ensuremath{\mathit{Name}}}
\newcommand{\thit}{H}
\newcommand{\upol}{\cv{u}}
\newcommand{\apol}{\cv{a}}
\newcommand{\ppol}{\cv{p}}
\newcommand{\tin}{I}
\newcommand{\tout}{O}
\newcommand{\atype}{\func{AType}}
\newcommand{\infacet}{\func{InFacet}}
\newcommand{\outrange}{\func{ORange}}
\newcommand{\outdef}{\func{ODef}}
\newcommand{\higherp}{\prec}
\newcommand{\domain}{\Delta}
\newcommand{\sigp}{\Gamma}
\newcommand{\cond}{\varphi}
\newcommand{\anycond}{\mathtt{-}}
\newcommand{\orcond}{\mathtt{,}}
\newcommand{\trules}{{R}}
\newcommand{\incond}{\func{If}}
\newcommand{\outval}{\func{Then}}
\newcommand{\dts}{\mathfrak{M}}
\newcommand{\odts}{\mathfrak{M}_{out}}
\newcommand{\km}{\mathfrak{K}}
\newcommand{\ireq}{\rightarrow}
\newcommand{\adrg}{\mathfrak{G}}
\newcommand{\oimap}{\Rightarrow}
\newcommand{\oimaptrans}{\overset{*}{\Rightarrow}}
\newcommand{\freein}[1]{\func{FreeInputs}(#1)}
\newcommand{\battr}[1]{\func{BoundAttr}(#1)}
\newcommand{\adkb}{\mathfrak{X}}
\newcommand{\aidkb}{\overline{\adkb}}
\newcommand{\transl}[1]{#1'}
\newcommand{\name}[2]{#1\!\cdot\!\attr{#2}}
\newcommand{\oset}[3][0ex]{%
  \mathrel{\mathop{#3}\limits^{
    \vbox to#1{\kern-2\ex@
    \hbox{$\scriptstyle#2$}\vss}}}}
\newcommand{\askmodels}{\oset[0ex]{?}{\models}}
\newcommand{\type}{D}
\newcommand{\dtype}{E}
\newcommand{\sig}{\Sigma}
\newcommand{\csig}{\sig_c}
\newcommand{\rsig}{\sig_r}
\newcommand{\fsig}{\sig_f}
\newcommand{\Sig}{\mathsf{Sig}}
\newcommand{\cl}[1]{\mathsf{cl}(#1)}
\newcommand{\rel}[1]{\textsl{#1}}
\newcommand{\get}[2]{#1.#2}
\newcommand{\exptime}{\textsc{ExpTime}\xspace}
\newcommand{\ACz}{AC$^0$\xspace}
\newcommand{\tdl}{\rho}
\newcommand{\tdlb}[1]{\rho_{#1}}
\tiny\color{black},
\bfseries\color{green!40!black},
\bfseries\color{purple!100},
\itshape\color{purple!40!black},
\title[Semantic DMN]{Semantic DMN: Formalizing and Reasoning About Decisions in
 the Presence of Background Knowledge}
\author[D.~Calvanese, M.~Dumas, F.~M.~Maggi, M.~Montali]{
 DIEGO CALVANESE and MARCO MONTALI\\
 Free University of Bozen-Bolzano, Italy\\
 \email{\{calvanese|montali\}@inf.unibz.it}
 \and
 MARLON DUMAS and FABRIZIO M. MAGGI\\
 University of Tartu, Estonia\\
 \email{\{m.dumas|f.m.maggi\}@ut.ee}
}
\begin{document}

\label{firstpage}

\maketitle

\begin{abstract}
  The Decision Model and Notation (DMN) is a recent OMG standard for the
elicitation and representation of decision models, and for managing their
interconnection with business processes.  DMN builds on the notion of decision
tables, and their combination into more complex decision requirements graphs
(DRGs), which bridge between business process models and decision logic models.
DRGs may rely on additional, external business knowledge models, whose
functioning is not part of the standard.  In this work, we consider one of the
most important types of business knowledge, namely background knowledge that
conceptually accounts for the structural aspects of the domain of interest, and
propose \emph{decision knowledge bases} (DKBs), which semantically combine DRGs
modeled in DMN, and domain knowledge captured by means of first-order logic
with datatypes.  We provide a logic-based semantics for such an integration,
and formalize different DMN reasoning tasks for DKBs.  We then consider
background knowledge formulated as a description logic ontology with datatypes,
and show how the main verification tasks for DMN in this enriched setting can
be formalized as standard DL reasoning services, and actually carried out in
\exptime.  We discuss the effectiveness of our framework on a case study in
maritime security.

\end{abstract}

\begin{keywords}
  Decision Model and Notation, decision tables, description logics, datatypes
\end{keywords}

\tableofcontents

\section{Introduction}

The \emph{Decision Model and Notation} (\dmn) is a recent OMG standard for the
representation and enactment of decision models \cite{DMN}. The standard
proposes a model and notation for capturing single decision tables as well as
the interconnection of multiple decision tables and their relationship with
other forms of business knowledge. In addition, it proposes a clean integration
with business process models, with particular reference to BPMN, so as to
achieve a suitable separation of concerns between the process logic and the
decision logic \cite{Batoulis2015}.

For all these reasons, the standard has attracted the attention of both
academia and industry, giving a new momentum to the field of decision and rule
management and its interplay with business process management. The standard is
already receiving widespread adoption in the industry, and an increasing number
of tools and techniques are being developed to assist users in modeling,
verifying, and applying DMN models. This is, e.g., witnessed by the
incorporation of DMN inside the \emph{Signavio} toolchain for business process
management\footnote{\url{https://www.signavio.com/}}, and inside the open-source
\emph{OpenRules} business rules and decision management
system\footnote{\url{http://openrules.com/}}.

DMN builds on the notion of \emph{decision table}\footnote{The DMN standard
 uses the term \emph{decision}, but we prefer to use here \emph{decision table}
 to avoid ambiguity between the technical notion defined by DMN and the general
 notion of decision.}, defined as \emph{``the act of determining an output
 value (the chosen option), from a number of input values, using logic defining
 how the output is determined from the inputs''} \cite{DMN}. This is
diagrammatically related to the long-standing notion of \emph{decision table}
\cite{Pooch74,VanD93}, which consists of columns representing the inputs and
outputs of a decision, and rows denoting rules.  Concretely, DMN comes with two
languages for capturing the decision logic.
The most sophisticated language, called \feel (\emph{Friendly Enough Expression
 Language}), is a complex, textual specification language not apt to be used
and understood by domain experts, and that does not come with a graphical
notation. It is Turing-powerful since it relies on various mechanisms to
specify rules using complex arithmetic expressions and generic functions, in
turn expressed in \feel itself or in external languages such as Java.
The second decision logic specification language supported by \dmn, the one we
are actually considering in this paper, is called \sfeel
(\emph{Simplified-\feel}). \sfeel is equipped with a graphical notation that is
also defined in the standard. It is a simple rule-based language that employs
comparison operators between attributes and constants as atomic expressions,
which are then combined into more general conditions using boolean operators
(with a restricted usage of negation). Interestingly, \sfeel emerged as a
suitable trade-off between expressiveness and simplicity, and its main
principles come from previous, long-standing research in decision and rule
management (e.g., a very similar language is adopted in the well-established
\emph{Prologa} tool\footnote{\url{https://feb.kuleuven.be/prologa/}}).

While DMN decision tables are rooted in mainstream approaches to decision
management, a distinctive feature of the DMN standard itself is the combination
of multiple decision tables into more complex so-called \emph{decision
 requirements graphs} (\drgs), graphically depicted using \emph{decision
 requirements diagrams} (\drds). DRGs provide \emph{``a bridge between business
 process models and decision logic models"} \cite{DMN}: for every task in the
process model of interest where decision-making is required, a dedicated \drg
provides a separate definition of which decisions must be made within such a
task, together with the interrelationships of such decisions, and their
requirements for decision logic. In particular \drgs may rely on additional
so-called \emph{business knowledge models} for their functioning. Business
knowledge models are external to the \drg, and consequently the standard does
not dictate how they should be specified, nor how they semantically interact
with the internal decision logic expressed by the \drg.

In this work, we consider one of the most important types of business
knowledge, namely background knowledge that conceptually accounts for the
relevant, structural aspects of the domain of interest (such as entities and
their main relationships). As customary, we assume that this background
knowledge is explicitly encapsulated inside an ontology. The main issue that
arises when a \drg is integrated with an ontology is that the \drg should not
any longer be interpreted under the assumption of complete
information. Interestingly, this does not only affect the way the \drg is
applied on specific input data to compute corresponding outputs, but also
impacts on the intrinsic properties of the \drg, such as \emph{completeness}
(defined as the ability of \emph{``providing a decision result for any possible
 set of values of the input data"} \cite{DMN}). To tackle this fundamental
challenge, we introduce a combined framework, called \emph{semantic \dmn},
based on the notion of \emph{decision knowledge base} (\dkb). A
\dkb semantically combines a decision logic, modeled as a \dmn \drg whose
decision tables are expressed in \sfeel, with a general ontology formalized
using multi-sorted first-order logic.  The different sorts are used to
seamlessly integrate abstract domain objects with data values belonging to the
concrete domains used in the DMN rules (such as strings, integers, and reals).

We provide a logic-based semantics for \dkbs, thus proposing, to the best of
our knowledge, the first formalization of \drgs and of their integration with
background knowledge.  Due to the specific challenges posed by such an
integration, the formalization of the DMN decision tables contained in the \drg
is of independent interest, and represents a conceptual refinement of the
logic-based formalization proposed by \citeN{CDL16}.  We then approach the
problem of actually reasoning on \dkbs, on the one hand providing a
formalization of the most fundamental reasoning tasks, and on the other hand
giving insights on how they can be actually carried out, and with which
complexity. To this end, we need to restrict the expressive power of the
ontology language. In fact, we target the significant case where the ontology
consists of a description logic (DL) \cite{BCMNP07} knowledge base equipped
with \emph{datatypes} \cite{Lutz02d,SaCa12,ArKR12,W3Crec-OWL2-Overview}.  In
such a DL, besides the domain of abstract objects, one can refer to concrete
domains of data values (such as strings, integers, and reals) accessed through
functional relations. Complex conditions on such values can be formulated by
making use of \emph{unary} predicates over the concrete domains. The
restriction to unary predicates only, is what distinguishes DLs with datatypes
from the richer setting of DLs with concrete domains, where in general
arbitrary predicates over the datatype/concrete domain can be specified. We
demonstrate that these constructs are expressive enough to encode \drgs.

 Then,
we exploit this encoding to show that all the introduced reasoning tasks can be
decided in \exptime in the case where background knowledge is represented using
the DL \ALCHdd.  This DL is a strict sub-language of the ontology language
OWL\,2, which has been standardized by the W3C \cite{W3Crec-OWL2-Overview},
hence one can rely on standard OWL\,2 reasoners \cite{TsHo06,SiPa06,ShMH08} for
all reasoning tasks in \ALCHdd and on \dkbs.
However, this does not provide us with computationally optimal complexity
bounds.  On the one hand, while OWL\,2 and \ALCHdd are equipped with multiple
datatypes (which is also indicated by the letter $\types$ in the name of the
latter logic), reasoning in the very expressive DL $\mathit{SROIQ}(D)$, which
is the formal counterpart of OWL\,2, was initially studied for a single
datatype only (which is indicated by the letter $D$ in the name) \cite{HoKS06}.
As pointed out already by \citeN{HoKS06}, the proposed reasoning technique can
be extended to multiple datatypes by following the proposal by \citeN{PaHo03}.
Still, the adopted algorithms are tableaux-based, and while typically effective
in practical scenarios, they do not provide worst-case optimal computational
complexity bounds \cite{BaSa00}.  To show that reasoning in \ALCHdd can indeed
be carried out in worst-case single exponential time, we develop a novel
algorithm that is based on the \emph{knot} technique proposed by \citeN{OrSE08}
for query answering in expressive DLs.

To introduce the main motivations behind our proposal and show its
effectiveness, we consider a complex case study in maritime security extracted
from one of the challenges of the \emph{decision management
 community}\footnote{\url{https://dmcommunity.org/}}, arguing that our approach
facilitates modularity, separation of concerns, and understanding of how a
decision logic can be contextualized in a specific setting.

This article is an extended version of an article by
\citeN{CDMM17}. Differently from that work, we consider here the new version of
the standard (i.e., DMN~1.1), and we deal not only with single decision tables,
but also with their interconnection in a \drg. By considering \drgs, we extend
the logic-based formalization proposed by \citeN{CDMM17}, expand our case study
accordingly, and introduce new interesting properties that refer to the overall
decision logic encapsulated in a \drg.

The article is organized as follows.  In Section~\ref{sec:case} we present the
case study in maritime security.  In Section~\ref{sec:preliminaries} we
introduce the two formalism we use in our formalization of \dmn \drgs, namely
multi-sorted first-order logic, and DLs extended with datatypes, and we define
\dmn decision tables according to DMN~1.1.  In Section~\ref{sec:dkb} we
introduce and formalize \dkbs, and discuss the reasoning tasks over them.  In
Section~\ref{sec:translation} we address the problem of reasoning over \dkbs by
resorting to a translation in DLs.  In Section~\ref{sec:related} we discuss
related work, and in Section~\ref{sec:conclusions} we draw final conclusions.

\section{Case Study}
\label{sec:case}

Our case study is inspired by the international \emph{Ship and Port Facility
 Security
Code}\footnote{\url{https://dmcommunity.wordpress.com/challenge/challenge-march-2016/}},
used by port authorities to determine whether a (cargo) ship can enter a Dutch
port.  On the one hand, this requires to decide ship clearance, that is,
whether a ship approaching the port can enter or not. On the other hand, we
also consider the communication of where the refueling station is located for
the approaching ship. Both such interrelated decisions depend on a combination
of ship-related data, some focusing on the physical characteristics of the
ship, and others on contingent information such as the transported cargo and
certificates exhibited by the owner of the ship.

\subsection{Domain Description}
\label{sec:domain-description}

To describe how ship clearance is handled by a port authority, three main
knowledge models, reported in \figurename~\ref{fig:ex}, have to be suitably
integrated:
\begin{itemize}
\item background domain knowledge, describing the different types of cargo
  ships and their physical characteristics (see
  \figurename~\ref{tab:shiptypes});
\item the clearance process, describing when clearance has to be assessed, how
  the different clearance-related tasks can unfold over time, which data must
  be collected, and which possible ending states exist (see
  \figurename~\ref{fig:bpmn-ex});
\item the clearance decision, capturing the decision logic that relates all the
  important ship data with the determination of whether the ship can enter or
  not, and where its refuel station is located (see
  \figurename~\ref{fig:dmn-ex}).
\end{itemize}
It is important to notice that the different knowledge models are not
necessarily developed and maintained by the same responsible authority, nor
co-evolve in a synchronous way. For example, the background knowledge may be
obtained by combining the catalogues produced by the different ship vendors,
and updated when vendors change the physical characteristics of the types of
ship they produce. The process may vary from port to port, still ensuring that
its functioning behaves in accordance with national regulations. Finally, the
clearance decision may contain decisions with different authorities: the
decision of whether a ship can enter into a port or not may be in fact handled
at the national level, keeping it aligned with the evolution of laws and norms,
whereas the determination of the refuel area may vary from port to port, so as
to reflect its physical characteristics and internal operational rules.

In the following, we detail each of the three knowledge models mentioned above.

\medskip
\noindent
\textbf{Business process.~}
The process adopted by the port authority is shown in
\figurename~\ref{fig:bpmn-ex} using the BPMN notation.  An instance of this
process is created whenever an entrance request is received by the port
authority from an approaching ship. The process management system immediately
extracts the main data associated to the ship, namely its identification code,
as well as its type. Then, two branches are executed in parallel. The first
branch is about performing a physical inspection of the ship, in particular to
determine the amount of cargo residuals carried by the ship. The second branch
deals instead with the acquisition of the ship certificate of registry, in
particular to extract its expiration date (for simplicity, we do not handle
here the case where the ship does not own a valid certificate).

Once all these data are obtained, a business rule task is used to decide about
whether the ship can enter into the port or not and, if so, where the refuel
area for that ship is located.  If the resulting decision concludes that the
ship cannot enter, the process terminates by communicating the refusal to the
ship.  If instead the ship is allowed to enter, the dock is opened and the
process terminates by informing the ship about the refuel area.

\begin{figure}[t]
\begin{subfigure}[b]{0.7\textwidth}
  \includegraphics[width=\textwidth]{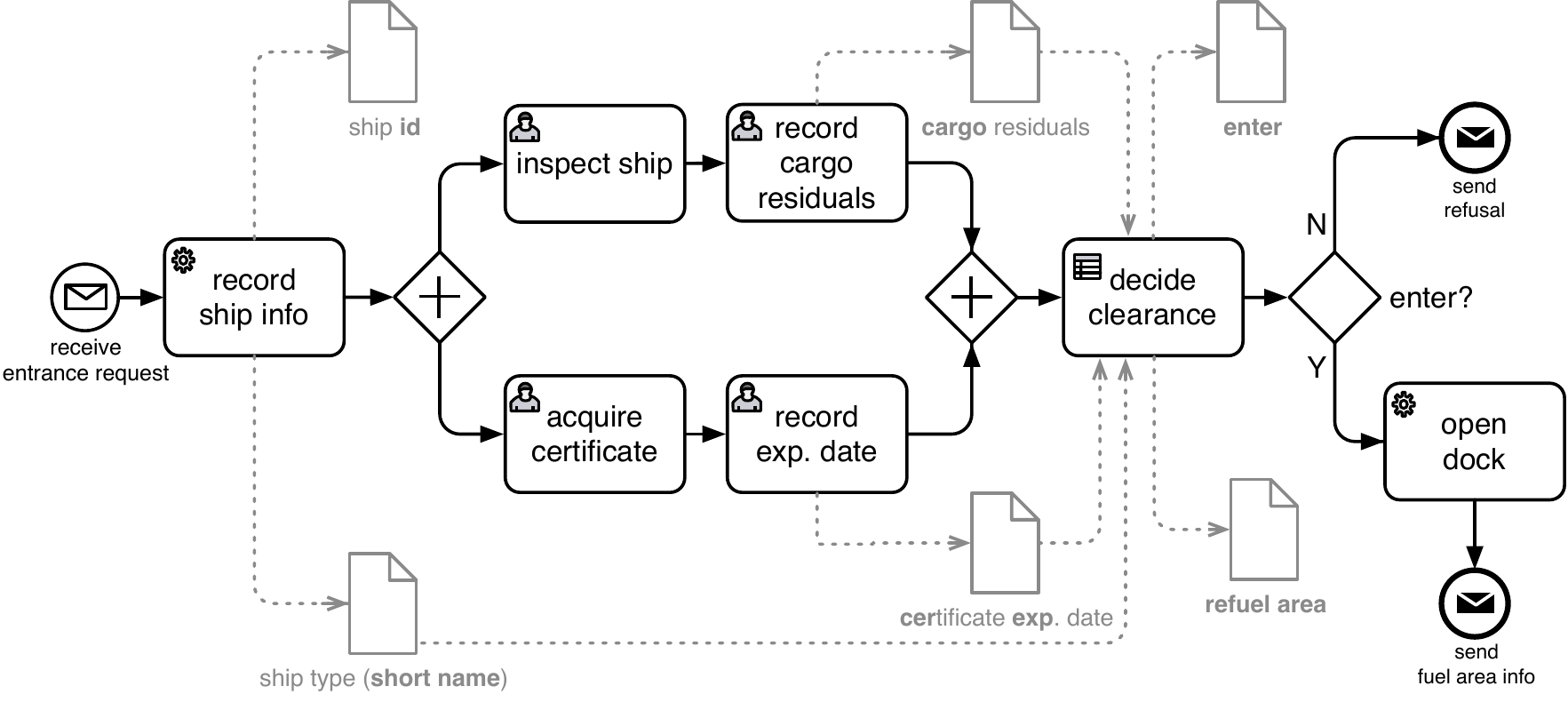}
  \caption{BPMN process}
  \label{fig:bpmn-ex}
\end{subfigure}
~
\begin{subfigure}[b]{0.28\textwidth}
  {\quad\raggedright\includegraphics[width=.85\textwidth]{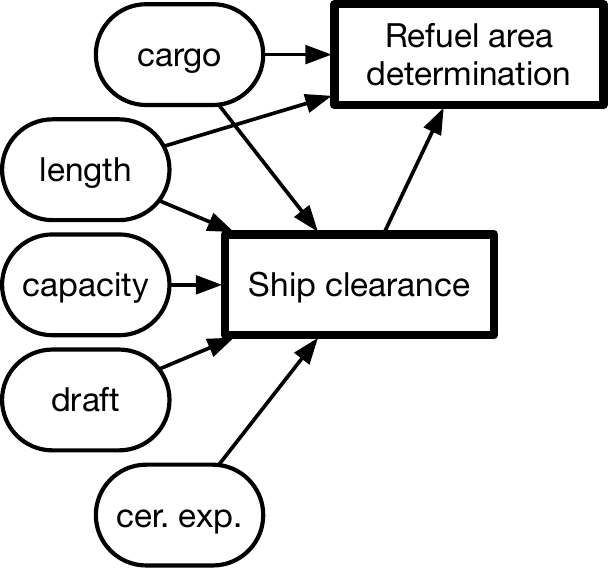}}
  \caption{\drd for \emph{decide clearance}}
  \label{fig:dmn-ex}
\end{subfigure}
\begin{subfigure}[b]{\textwidth}
\begin{tabular}
{
l@{\quad}l@{\qquad\qquad}l@{\quad}l@{\quad}l
}
\hline
\textbf{Ship Type} &
\textbf{Short Name} &
\textbf{Length} (m) &
\textbf{Draft} (m) &
\textbf{Capacity} (TEU) \\
\hline
\textit{Converted Cargo Vessel}
&
\rel{CCV}
& 135 & 0~--~9 & 500 \\
\textit{Converted Tanker}
&
\rel{CT}
& 200 & 0~--~9 & 800 \\
\textit{Cellular Containership}
&
\rel{CC}
& 215 & 10 & 1000~--~2500 \\
\textit{Small Panamax Class}
&
\rel{SPC}
& 250 & 11~--~12 & 3000 \\
\textit{Large Panamax Class}
&
\rel{LPC}
& 290 & 11~--~12 & 4000          \\
\textit{Post Panamax}
&
\rel{PP}
& 275~--~305 & 11~--~13 & 4000~--~5000   \\
\textit{Post Panamax Plus}
&
\rel{PPP}
& 335 & 13~--~14 & 5000~--~8000   \\
\textit{New Panamax}
&
\rel{NP}
& 397 & 15.5 & 11000~--~14500\\
\hline
\end{tabular}
\caption{Ontology of cargo ships and their physical characteristics}
\label{tab:shiptypes}
\end{subfigure}

\caption{Three knowledge models used by a port authority to determine clearance
 and location of the refuel area for a ship: (a) captures the business process
 using BPMN; (b) shows a \dmn \drd encapsulating the decision logic underlying
 the \emph{decide clearance} business rule task used in the process; the
 decision logic has to be understood in the context of the ship ontology
 depicted in (c), which provides the background knowledge to understand the
 relationship between ship types and their corresponding characteristics. The
 ship ontology is hence depicted as a business knowledge model in the \drd.}
\label{fig:ex}
\end{figure}

\medskip
\noindent
\textbf{Background domain knowledge.~}
In our setting, we consider background knowledge describing the different types
of ships that may enter the port, together with their physical characteristics:
\begin{compactitem}
\item \emph{length} of the ship (in $m$);
\item \emph{draft} size (in $m$);
\item \emph{capacity} of the ship (in $\mathit{TEU}$,
  which stands for Twenty-foot Equivalent Units).
\end{compactitem}
The taxonomy of ship types, together with the relationship between types and physical characteristics, is captured in a \emph{ship ontology}, depicted in
\tablename~\ref{tab:shiptypes} using an informal tabular format.

\medskip
\noindent
\textbf{Decision logic.~}
The decision logic consists of two decision tables: a \emph{ship clearance}
decision table used to determine whether a ship can enter a port or not, and a
\emph{refuel area determination} decision table used to compute which refuel
area should be used by a ship.  As pointed out before, we assume that the first
 table is fixed nationally, while the second is defined on a per-port
basis.

Let us first focus on ship clearance. A ship can enter the port only if the
ship complies with the requirements of the inspection.  This is the case if the
ship is equipped with a \emph{valid certificate of registry}, and the ship
meets the \emph{safety requirements}.  The certificate of registry owned by the
ship is considered valid if the certificate expiration date is after the
current date. The rules for establishing whether a ship meets the safety
requirements depend on the characteristics of the ship, and the amount of
residual cargo present in the ship. The limitation concerning residuals is specified in terms of concentration per space unit, fixing thresholds that depend on the capacity of the ship. This is because residual cargo has to be manually inspected by the port authority. Thresholds are then put to limit the amount of resources and time for the inspection. In addition, the concentration limits vary depending on the ship capacity so as to level the maximum amount of overall residuals to be inspected, irrespectively of the ship type.

In particular, small ships (with maximum
length 260\,m and maximum draft 10\,m) may enter provided that their capacity
does not exceed 1000\,TEU. Ships with a small length (maximum 260\,m), medium
draft comprised between 10\,m and 12\,m, and capacity not exceeding 4000\,TEU,
may enter only if the carried cargo residuals do not exceed 0.75\,mg dry weight
per cm$^{2}$. Ships of medium size (with length comprised between 260\,m and
320\,m excluded, and draft strictly bigger than 10\,m and not exceeding 13\,m),
and with a cargo capacity below 6000\,TEU, may enter only if their cargo
residuals do not exceed 0.5\,mg dry weight per cm$^{2}$.  Finally, big ships
with length comprised between 320\,m and 400\,m excluded, draft larger than
13\,m, and capacity exceeding 4000\,TEU, may enter only if their carried
residuals are at most 0.25\,mg dry weight per cm$^{2}$. Larger ships are not explicitly mentioned in the rules, and are therefore implicitly considered as not eligible for entering.

Let us now focus on the determination of the refuel area. This decision table
depends on ship clearance, and on some of the physical characteristics of the
ship, in particular length and draft size. On the one hand, if clearance is
rejected, then no area is assigned (this is represented using string \none). On
the other hand, the \indoor refuel area is preferred over the \outdoor area,
but it is not possible for too big ships to refuel indoor, due to physical
constraints. In particular, ships that are longer than 350\,m can only refuel
indoor if their cargo does not carry more than 3\,mg dry weight per cm$^{2}$.

\subsection{Challenges}
\label{sec:challenges}
The first challenge posed by this case study concerns modeling, representation,
management, and actual application of the clearance and refuel decision logic,
as well as its integration with business processes. All these issues are
tackled by the \dmn standard. In particular, the standard defines clear
guidelines on how to encode and graphically represent the input/output attributes and
the rules of interest in the form of decision tables, as well as to aggregate
them into a \drg that highlights how they interact with each other, and with other
business knowledge models.

Specifically, the \drg of our case study is graphically rendered using the \drd
of \figurename~\ref{fig:dmn-ex}. In this \drd:
\begin{itemize}
\item Rounded rectangles represent \emph{input data}, to be assigned externally
  when the decision logic has to be applied in a specific context.
\item Rectangles represent \emph{decision tables}, which may require as input
  either input data provided externally, or the output produced by other
  decision tables; decision tables producing output results that must be
  returned to the external world are represented with bold contour (in our case
  study, both decision tables are of this form).
\item Solid arrows represent \emph{information requirements}, indicating which
  input data or decision tables are used as input to other decision tables.
\item Rectangles with two clipped corners represent \emph{business knowledge
   models}, which may be used by decision tables to properly compute their
  input-output relation (see \figurename~\ref{fig:bkm-ex}).
\end{itemize}
Additional constructs are available to interconnect different \drgs and
describe authorities, i.e., sources of knowledge, but they are orthogonal to
the core aspects introduced above, so we do not consider such additional
constructs here.
%

Each \dmn decision table can be decorated with meta-information capturing its
\emph{hit policy}, i.e., declare whether its contained rules are
non-overlapping or, if they do, how to reconcile the output values produced by
multiple rules that simultaneously trigger. In addition, both single decision
tables and complete \drgs shall be understood in terms of their
\emph{completeness}, i.e., whether they are able to produce a final output for
each possible configuration of the input data, or whether there are inputs for
which the decision table is undefined. This is particularly critical in the
case of \drgs, since incompleteness may be caused by internal mismatches
between decision tables interconnected via information requirements. The main
issue when it comes to such properties is that there is no guarantee that they
are actually reflected by the actual decision logic \cite{CDL16}. In addition,
while these are the main properties mentioned by the \dmn standard, there are
many more properties that should be checked so as to ascertain the correctness
of a \drg. For example, one could check whether all rules may potentially
trigger. In the case of a single decision table this problem boils down to
check whether certain rules are masked by others \cite{CDL16}. However, in the
more general case of a \drg, the fact that a rule never triggers may be related
again to the complex interconnection among multiple decision tables.

The main point of this work, though, is that the investigation of such
properties and, more in general, the meaning of a decision logic, cannot be
understood in isolation from background knowledge, but has instead to be
analyzed \emph{in the light} of such knowledge. Conceptually, this requires to
lift from an approach working under complete information to one that works
under \emph{incomplete information}, and where the background knowledge is used
to constrain, complement, and contextualize the decision logic. This interplay
is far from trivial, and impacts on the properties of a \drg and its contained
decision tables, their input-output semantics as well as, ultimately, their
correctness.

Here we discuss, using our case study, two of the most critical challenges when
it comes to understand \dmn in the presence of background knowledge. First and
foremost, let us consider in more detail the interplay between the BPMN process
in \figurename~\ref{fig:bpmn-ex}, and the \drg in
\figurename~\ref{fig:dmn-ex}. According to the standard, the integration
between a process and a \drg is realized by introducing a business rule task in
the process, then linking such a task to the \drg. This implicitly assumes a
clear information exchange between these two knowledge models. On the one hand,
when an instance of the business rule task is created in the context of a
specific process instance, the input data of the \drg are bound to actual
values obtained from the state process instance. On the other hand, the output
values produced by the \drg are made visible to the process instance, which may
rely on them to decide how to consequently route the instance.

In our case study, it is clear that, \emph{syntactically}, the process and the
\drg do not properly integrate with each other, in particular for what concerns
the input data of the \drg. On the one hand, the \drg applies a comprehensive
strategy, where all physical parameters of the ship are requested as input. On
the other hand, the process adopts a pragmatic approach, in which only the ship
type and the cargo residuals are recorded, without requiring the port personnel
to measure each single physical parameter of the ship. While it is clear that a
syntactic interconnection between the two knowledge models would not work, what
about a \emph{semantic interconnection} that considers the ship ontology as
background knowledge? It turns out, interestingly, that once the ship ontology
is inserted into the picture, the process and the \drg can properly
interoperate. In fact, once the type of a ship is acquired, the ontology allows
one to infer partial, but sufficient information about the physical
characteristics of the ship, so as to properly apply the \drg once also the
expiration date of the certificate, and the amount of cargo residuals, are
obtained. It is worth noting that the ship ontology could not be reduced to an
additional decision table component of the \drg: \tablename~\ref{tab:shiptypes}
is not a decision table, since it is not always possible to univocally compute
the ship characteristics from the type (see, e.g., the case of \emph{Post
 Panamax} ship type). In fact, the domain knowledge captured by
\tablename~\ref{tab:shiptypes} is a set of \emph{constraints}, implicitly
discriminating combinations of ship types and characteristics that are allowed
from those that are impossible.

A second, open challenge relates to how the formal properties of single tables
change when they are interconnected in a \drg, and/or interpreted in the
presence of background knowledge. Consider the ship clearance decision table
and its associated rules described above. By elaborating on such rules, one
would conclude that such rules are non-overlapping, and that they are
incomplete, since, e.g., they do not handle clearance of a long ship
($\geq$\,320\,m) with small draft ($\leq$\,10\,m). While the non-overlapping
property clearly holds also when the ship ontology is considered, this is not
the case for incompleteness. In fact, under the assumption that all possible
ship types are those listed in \tablename~\ref{tab:shiptypes}, one would infer
that all the allowed combinations of physical parameters as captured by the
ontology are actually covered by the ship clearance decision table, which is in
fact \emph{complete with respect to the ship ontology}. E.g., the table clearly
shows that the aforementioned combination of parameters is impossible: long
ships cannot have such a small draft.

Finally, consider the decision table for refuel area determination. It is easy
to see that the rules encapsulated in such a decision table are complete and
non-overlapping. However, once this decision table is interconnected to ship
clearance, it turns out that the \outdoor station is never selected. In fact,
such a station is selected for ships whose physical characteristics lead to
reject the entrance request and, in turn, to be assigned to \none refuel area
independently of the actual physical characteristics.

Identifying all such issues is extremely challenging, and this is why we
propose a framework that on the one hand formally defines the interplay between
the different knowledge models, and on the other hand provides automated
reasoning capabilities to actually check the overall properties of a \drg in
the presence of background knowledge, as well as compute the consequences of a
decision table when input data are only partially specified, if possible.

\section{Sources of Decision Knowledge}
\label{sec:preliminaries}

We now generalize the case study presented in Section~\ref{sec:case}, and
introduce the two main knowledge models of semantic \dmn: background knowledge
expressed using a logical theory enriched with datatypes, and decision logic
captured as a \dmn \drg.


\subsection{Logics with Datatypes}
\label{sec:logics}

To capture background knowledge, we resort to a variant of multi-sorted
first-order logic (see, e.g., \citeNP{Ende01}), which we call \fold, where one 
sort $\univ$ denotes a domain of abstract objects, while the remaining sorts
represent a finite collection \types of datatypes.  We consider a countably
infinite set $\sig$ of predicates, where each $p\in\sig$ comes with an arity
$n$, and a signature $\Sig_p:\{1,\ldots,n\}\ra\types\uplus\{\univ\}$, mapping
each position of $p$ to one of the sorts.  \fold contains unary and binary
predicates only.  A unary predicate $N$ with $\Sig_N(1)=\univ$ is called a
\emph{concept}, a binary predicates $P$ with $\Sig_P(1)=\Sig_P(2)=\univ$
a \emph{role}, and a binary predicate $F$ with $\Sig_F(1)=\univ$ and
$\Sig_F(2)\in\types$
a \emph{feature}.

\begin{example}
  \label{ex:fol-axiom}
  The cargo ship ontology in \tablename~\ref{tab:shiptypes} should be
  interpreted as follows: each entry applies to a ship, and expresses how the
  specific ship type constrains the other features of the ship, namely length,
  draft, and capacity. Thus the first table entry is encoded in \fold as
  \[
    \forall s. \rel{stype}(s,``CCV") \limp
    \begin{array}[t]{@{}l}
      \rel{Ship}(s) \land \forall \ell.(\rel{length}(s,\ell) \limp \ell=135)
      \land{}\\
      \forall d. (\rel{draft}(s,d) \limp d \geq 0 \land d \leq 9) \land
      \forall c. (\rel{capacity}(s,c) \limp c = 500)
    \end{array}
  \]
  where $\rel{Ship}$ is a concept, $\rel{stype}$ is a feature of sort string,
  while $\rel{length}$, $\rel{draft}$, and $\rel{capacity}$ are all features of
  sort real.
\end{example}

We consider also well-behaved fragments of \fold that are captured by
description logics (DLs) extended with datatypes.  For details on DLs, we refer
to \citeN{BCMNP07}, and for a survey of DLs equipped with datatypes (also
called, in fact, \emph{concrete domain}), to \citeN{Lutz02d}.  Here we adopt
the DL \ALCHdd, which is an extension of the well-known DL \ALCd \cite{Lutz02d}
in two orthogonal directions: on the one hand \ALCHdd allows one to express
inclusions between two roles and between two features, which is denoted by the
presence in the name of the logic of the letter $\mathsf{H}$, for role/features
hierarchies; on the other hand, \ALCHdd is equipped with multiple datatypes,
instead of a single one.  As for datatypes, we follow the proposal by
\citeN{MoHo08}, on which the OWL~2 datatype maps are based
\cite[Section~4]{W3Crec-OWL2-Syntax}, but we adopt some simplifications that
suffice for our purposes.

\smallskip
\noindent
\textbf{Datatypes.~}
A \emph{(primitive) datatype} $\type$ is a pair
$\tup{\domain_\type,\sigp_\type}$, where $\domain_\type$ is the \emph{domain}
of values\footnote{We blur the distinction between \emph{value space} and
 \emph{lexical space} of OWL~2 datatypes, and consider the datatype domain
 elements as elements of the lexical space interpreted as themselves.}  of
$\type$, and $\sigp_\type$ is a (possibly infinite) set of \emph{facets},
denoting unary predicate symbols.  Each facet $S\in\sigp_\type$ comes with a
set $S^\type \subseteq \domain_\type$ that rigidly defines the semantics of $S$
as a subset of $\domain_\type$.  Given a primitive datatype $\type$,
\emph{datatypes $\dtype$ derived from $\type$} are defined according to the
following syntax
  \[
    \dtype ~\lora~ \type ~\mid~ \dtype_1\cup\dtype_2 ~\mid~
    \dtype_1\cap\dtype_2 ~\mid~ \dtype_1\setminus\dtype_2 ~\mid~
    \{\cv{v}_1,\ldots,\cv{v}_m\} ~\mid~ \type[S]
  \]
  where $S$ is a facet for $\type$, and $\cv{v}_1,\ldots,\cv{v}_m$ are
  datatype values in $\domain_\type$.  The domain of a derived datatype is
  obtained for $\cup$, $\cap$, and $\setminus$, by applying the corresponding
  set operator to the domains of the component datatypes, for
  $\{\cv{v}_1,\ldots,\cv{v}_m\}$ as the set
  $\{\cv{v}_1,\ldots,\cv{v}_m\}$, and for $\type[S]$ as $S^\type$.
%
In the remainder of the paper, we consider the (primitive) datatypes present in
the S-FEEL language of the DMN standard: strings equipped with equality, and
numerical datatypes, i.e., naturals, integers, rationals, and reals equipped
with their usual comparison operators (which, for simplicity, we all illustrate
using the same set of standard symbols $=$, $<$, $\leq$, $>$, $\geq$). We
denote this core set of datatypes as $\types$.  Other S-FEEL datatypes, such as
that of datetime, are syntactic sugar on top of $\types$.

A facet for one of these datatypes $\type\in\types$ is specified using a binary
comparison predicate $\odot$, together with a \emph{constraining value} $v$,
and is denoted as $\odot_v$.  E.g., using the facet $\leq_9$ of the primitive
datatype $\rel{real}$, we can define the derived datatype $\rel{real}[\leq_9]$,
whose value domain are the real numbers that are $\leq 9$.  In the following,
we abbreviate %
$\type[S_1]\cap\type[S_2]$ as $\type[S_1{\land} S_2]$, %
$\type[S_1]\cup\type[S_2]$ as $\type[S_1{\lor} S_2]$, and %
$\type[S_1]\setminus\type[S_2]$ as $\type[S_1{\land}\lnot S_2]$, where $S_1$
and $S_2$ are either facets or their combinations with Boolean operators.

\medskip

Let $\univ$ be a countably infinite universe of objects.  A (DL)
\emph{knowledge base with datatypes} (KB hereafter) is a tuple
$\tup{\sig,T,A}$, where $\sig$ is the \emph{KB signature},
$T$ is the \emph{TBox} (capturing the intensional knowledge of the domain of
interest), and $A$ is the \emph{ABox} (capturing extensional knowledge). When
the focus is on the intensional knowledge only, we omit the ABox, and call the
pair $\tup{\sig,T}$ \emph{intensional KB} (IKB).
The form of $T$ and $A$ depends on the specific DL of interest.
Next, we introduce each component for the DL \ALCHdd, which is equipped with
multiple datatypes.

\smallskip
\noindent
\textbf{Signature.~} In a DL with datatypes, the signature
$\sig = \csig \uplus \rsig \uplus \fsig$ of a KB is partitioned into three
disjoint sets:
\begin{inparaenum}[\itshape (i)]
\item a finite set $\csig$ of \emph{concept names}, which are unary predicates
  interpreted over $\univ$, each denoting a set of objects, called the
  \emph{instances} of the concept;
\item a finite set $\rsig$ of \emph{role names}, which are binary predicates
  connecting pairs of objects in $\univ$; and
\item a finite set $\fsig$ of \emph{features}, which are binary
  \emph{functional} predicates, connecting an object to at most one typed
  value.  In particular, each feature $F$ comes with its datatype
  $\type_F\in\types$, which constrains the values to which the feature can
  connect an object.  When a feature $F$ connects an object $o$ to a value
  $\cv{v}$ (of type $\type_F$), we say that $F$ is \emph{defined for} $o$ and
  that $\cv{v}$ is the \emph{$F$-value} of $o$.
\end{inparaenum}


\smallskip
\noindent
\textbf{Concepts and roles.}  Each DL  is characterized by a set of constructs
that allow one to obtain complex concept and role expressions, by starting
from concept and role names, and inductively applying such constructs.  The DL
\ALCHdd provides only concept constructs, and no constructs for roles or
features.  Hence, the only roles and features that might be used are atomic
ones, given simply by a role name $R\in\rsig$ or a feature name $F\in\fsig$,
respectively.  Instead, \emph{concepts} $C$ are defined according to the
following grammar, where $N\in\csig$ denotes a concept name:
\[
  C ~\lora~
  \top ~\mid~ \bot ~\mid~ N ~\mid~
  \NOT C ~\mid~ C_1\AND C_2 ~\mid~ C_1\OR C_2 ~\mid~
  \SOME{R}{C} ~\mid~ \ALL{R}{C} ~\mid~ \SOME{F}{\dtype} ~\mid~ \UNDEF{F}.
\]
The intuitive meaning of the concept constructs is as follows.
\begin{itemize}
\item $N$ denotes an atomic concept, given simply by a concept name in $\csig$.
\item $\top$ is called the \emph{top concept}, denoting the set of all objects
  in $\univ$.
\item $\bot$ is called the \emph{empty concept}, denoting the empty set.
\item $\NOT C$ is called the \emph{complement} of concept $C$, and it denotes
  the set of all objects in $\univ$ that are not instances of $C$.
\item $C_1 \AND C_2$ is the \emph{conjunction} and $C_1\OR C_2$ the
  \emph{disjunction} of concepts $C_1$ and $C_2$, respectively denoting
  intersection and union of the corresponding sets of instances;
\item $\SOME{R}{C}$ is called a \emph{qualified existential restriction}.
  Intuitively, it allows the modeler to single out those objects that are
  connected via (an instance of) role $R$ to some object that is an instance of
  concept $C$;
\item $\ALL{R}{C}$ is called a \emph{value restriction}.  Intuitively, it
  denotes the set of all those objects that are connected via role $R$ only to
  objects that are instances of concept $C$;
\item $\SOME{F}{\dtype}$, where $F\in\fsig$ is a feature, and $\dtype$ a
  datatype that is either $\type_F$ or a datatype derived from $\type_F$, is
  called a \emph{feature restriction}.  Intuitively, it denotes the set of
  those objects for which the $F$-value satisfies condition $\dtype$,
  interpreted in accordance with the underlying datatype;
\item $\UNDEF{F}$ denotes the set of those objects for which feature $F$ is not
  defined.
\end{itemize}
Notice that the above constructs are not all independent from each other.
Indeed:
\begin{compactitem}
\item $\top$ is equivalent to $\NOT\bot$;
\item by De Morgan's laws, we have that $C_1\OR C_2$ is equivalent to
  $\NOT(\NOT C_1\AND \NOT C_2)$;
\item qualified existential restriction and value restriction are dual
  constructs, since $\ALL{R}{C}$ is equivalent to $\NOT\SOME{R}{\NOT C}$;
\item $\UNDEF{F}$ is equivalent to $\NOT\SOME{F}{\type_F}$.
\end{compactitem}
We also observe that, since features are functional relations, we do not need a
counterpart of value restriction for features.  Indeed, we have that
$\NOT\SOME{F}{\dtype}$ is equivalent to
$\UNDEF{F}\OR\SOME{F}{(\type_F\setminus\dtype)}$.

\smallskip
\noindent
\textbf{TBox.~} $T$ is a finite set of universal FO axioms based on predicates
in $\sig$, and on predicates and values of datatypes in $\types$.
Specifically, an \ALCHdd TBox is a finite set of \emph{assertions} of the
following forms:
\[
  \begin{array}[b]{r@{~}c@{~}l@{\qquad}l}
    C_1 &\ISA& C_2 & \textit{(concept inclusion)},\\
    R_1 &\ISA& R_2 & \textit{(role inclusion)},\\
    F_1 &\ISA& F_2 & \textit{(feature inclusion)},
  \end{array}
  \qquad\qquad
  \begin{array}[b]{r@{~}c@{~}l@{\qquad}l}
    R_1 &\ISA& \NOT R_2 & \textit{(role disjointness)},\\
    F_1 &\ISA& \NOT F_2 & \textit{(feature disjointness)},
  \end{array}
\]
where $C_1$ and $C_2$ are two \ALCHdd concepts, $R_1$ and $R_2$ two roles, and
$F_1$ and $F_2$ two features.  Intuitively, the first type of inclusion
assertion models that whenever an object is an instance of $C_1$, then it is
also an instance of $C_2$, and similarly for the other two types of inclusion
assertions, considering respectively pairs of objects, and pairs consisting of
an object and a value.  Instead, disjointness assertions are used to model that
no pair that is an instance of a role/feature can also be an instance of
another role/feature.  Notice that there is no need for a separate concept
disjointness assertion, since it can be mimicked by using negation in the
concept appearing in the right-hand side of a concept inclusion.

\begin{example}
  \label{ex:ship-ontology}
  The \ALCHdd encoding of the first entry in \tablename~\ref{tab:shiptypes} is:
  \[
    \SOME{\rel{stype}}{\rel{string}}[=_{\cv{``CCV"}}]
    ~\ISA~
    \rel{Ship}
    \begin{array}[t]{@{}l}
      {}~\AND~ \ALL{\rel{length}}{\rel{real}[=_{135}]}\\
      {}~\AND~ \ALL{\rel{draft}}{\rel{real}[\geq_{0} \land \leq_{9}]} ~\AND~
      \ALL{\rel{capacity}}{\rel{real}[=_{500}]}
    \end{array}
  \]
  All other table entries can be formalized in a similar way. The entire table
  is then captured by the union of all so-obtained inclusion assertions, plus
  an assertion expressing that the types mentioned in
  \tablename~\ref{tab:shiptypes} exhaustively \emph{cover} all possible ship
  types:
  \[
    \rel{Ship}
    ~\ISA~
    \begin{array}{@{}l}
      \SOME{\rel{stype}}{\rel{string}[=_{\cv{``CCV"}}]} ~\OR~
      \SOME{\rel{stype}}{\rel{string}[=_{\cv{``CT"}}]} ~\OR~ \cdots ~\OR~
      \SOME{\rel{stype}}{\rel{string}[=_{\cv{``NP"}}]}
    \end{array}
  \]
\end{example}

\smallskip
\noindent
\textbf{ABox.~} The ABox $A$ is a finite set of \emph{assertions}, or
\emph{facts}, of the form $N(o)$, $P(o,o')$, or $F(o,\cv{v})$, where $N$ is a
concept name, $P$ a role name, $F$ a feature, $o,o'\in\univ$, and
$\cv{v} \in \domain_{\type_F}$.\footnote{For simplicity, we have assumed that
 the objects occurring in an ABox are elements of the domain $\univ$.  In other
 words, we have made the \emph{standard name assumption}.}

\begin{figure}[tbp]
  \[
    \begin{array}{r@{~~}c@{~~}l}
      \Int{\top} &=& \univ\\
      \Int{\bot} &=& \emptyset\\
      \INT{\NOT C} &=& \univ \setminus \Int{C}\\
      \INT{C_1 \AND C_2} &=& \Int{C_1} \cap \Int{C_2}\\
      \INT{C_1 \OR C_2} &=& \Int{C_1} \cup \Int{C_2}\\
      \INT{\SOME{R}{C}} &=& \{x\in\univ \mid \exists y\in\univ
        \text{ such that } \tup{x,y}\in\Int{R} \text{ and } y\in\Int{C} \}\\
      \INT{\ALL{R}{C}} &=& \{x\in\univ \mid \forall y\in\univ, \text{ if }
        \tup{x,y} \in \Int{R} \text{ then } y \in \Int{C} \}\\
      \INT{\SOME{F}{\dtype}} &=&
        \{x\in\univ \mid \exists \cv{v}\in\domain_{\type_F} \text{ such that }
        \tup{x,\cv{v}}\in\Int{F} \text{ and } \cv{v}\in\dtype\}\\
      \INT{\UNDEF{F}} &=&
        \{x\in\univ \mid \lnot\exists \cv{v}\in\domain_{\type_F}
        \text{ such that } \tup{x,\cv{v}} \in \Int{F}\}
    \end{array}
  \]
  \caption{Semantics of the \ALCHdd concept constructs}
  \label{fig:dl-semantics-concepts}
\end{figure}

\begin{figure}[tbp]
  \[
    \begin{array}{c}
      \begin{array}[b]{r@{\quad\text{if}\quad}l}
        C_1 \ISA C_2 & \Int{C_1} \subseteq \Int{C_2};\\
        R_1 \ISA R_2 & \Int{R_1} \subseteq \Int{R_2};\\
        F_1 \ISA F_2 & \Int{F_1} \subseteq \Int{F_2};
      \end{array}
      \qquad\qquad\qquad
      \begin{array}[b]{r@{\quad\text{if}\quad}l}
        R_1 \ISA \NOT R_2 & \Int{R_1} \cap \Int{R_2}=\emptyset;\\
        F_1 \ISA \NOT F_2 & \Int{F_1} \cap \Int{F_2}=\emptyset;\\[2mm]
      \end{array}\\
      \begin{array}{r@{\quad\text{if}\quad}l}
        N(o) & o \in \Int{N};\\
        R(o,o') & \tup{o,o'} \in \Int{R};\\
        F(o,\cv{v}) & \tup{o,\cv{v}} \in \Int{F}.
      \end{array}
    \end{array}
  \]
  \caption{Satisfaction of \ALCHdd TBox and ABox assertions}
  \label{fig:dl-semantics-assertions}
\end{figure}

\smallskip
\noindent
\textbf{Semantics.~} The semantics of an \ALCHdd KB $K=\tup{\sig,T,A}$ relies,
as usual, on the notion of first-order interpretation $\I=\inter$ over the
domain $\dom\subseteq\univ$, where $\Int{\cdot}$ is an interpretation function
mapping each atomic concept $N$ in $T$ to a set $\Int{N}\subseteq\dom$, each
role $R$ to a binary relation $\Int{R} \subseteq \dom\times\dom$, and each
feature $F$ to a relation $\Int{F} \subseteq \dom\times\domain_{\type_F}$ that
is functional, i.e., such that, if
$\{\tup{d,\cv{v}_1}, \tup{d,\cv{v}_2}\}\subseteq \Int{F}$, then
$\cv{v}_1=\cv{v}_2$.  Complex concepts are interpreted as shown in
\figurename~\ref{fig:dl-semantics-concepts}, and when an interpretation $\I$
\emph{satisfies} a TBox assertion or an ABox assertion is shown in
\figurename~\ref{fig:dl-semantics-assertions}.
Finally, we say that $\I$ is a \emph{model} of $T$ if it satisfies all
inclusion assertions of $T$, and a \emph{model} of $K$ if it satisfies all
assertions of $T$ and $A$.

\smallskip
\noindent
\textbf{Reasoning in \ALCHdd.~}
We first recall the definition of the main reasoning tasks over DL KBs, which
we will use later to formalize reasoning over \dmn \drgs:
\begin{itemize}
\item \emph{TBox satisfiability}: given a TBox $T$, determine whether $T$
  admits a model.
\item \emph{Concept satisfiability with respect to a TBox}: given a TBox $T$
  and a concept $C$, determine whether $T$ admits a model $\I$ such that
  $\Int{C}\neq\emptyset$.
\item \emph{KB satisfiability}: given a KB $K$, determine whether $K$ admits a
  model.
\item \emph{Instance checking}:
  \begin{itemize}[--]
  \item for concepts: given a KB $K$, a concept $C$, and an object $o$,
    determine whether $o\in\Int{C}$, for every model $\I$ of $K$;
  \item for roles: given a KB $K$, a role $R$, and a pair of objects $o$, $o'$,
    determine whether $\tup{o,o'}\in\Int{R}$, for every model $\I$ of $K$;
  \item for features: given a KB $K$, a feature $F$, an object $o$, and a value
    $\cv{v}$, determine whether $\tup{o,\cv{v}}\in\Int{F}$, for every model
    $\I$ of $K$.
  \end{itemize}
\end{itemize}
TBox reasoning in $\ALC$ with a single concrete domain $\type$ is decidable in
\exptime (and hence is \exptime-complete) under the assumption that
\begin{inparaenum}[\itshape (i)]
\item the logic allows for unary concrete domain predicates only,
\item the concrete domain $\type$ is \emph{admissible} \cite{HaMW01,HoSa01},
  and
\item checking $\type$-satisfiability, i.e., the satisfiability of conjunctions
  of predicates of $\type$, is decidable in \exptime.
\end{inparaenum}
This follows from a slightly more general result shown by
\citeN[Section~2.4.1]{Lutz02c}.  Admissibility requires that the
set of predicate names is closed under negation and that it contains a
predicate name denoting the entire domain.
Hence, TBox reasoning in $\ALC$ extended with one of the concrete domains used
in DMN (e.g., integers or reals, with facets based on comparison predicates
together with a constraining value), is \exptime-complete.
The variant of DL with concrete domains that we consider here, \ALCHdd, makes
only use of unary concrete domain (i.e., datatype) predicates, but allows for
multiple datatypes, and also for role and feature inclusions.  Moreover, we are
also interested in reasoning in the presence of an ABox.  Hence, the above
decidability and complexity results do not directly apply.
%
%
However, we can adapt to our needs a technique proposed by \citeN{OrSE08} and
refined by \citeN{ELOS09b} and \citeN{Orti10} for reasoning over a KB
(actually, to answer queries over a KB), to show the following result.

\begin{theorem}
  \label{thm:alcdd}
  Let $\types$ be a set of datatypes such that for all datatypes
  $\type\in\types$ checking $\type$-satisfiability is decidable in \exptime.
  Then, for \ALCHdd KBs, the problems of concept satisfiability with respect to
  a TBox, KB satisfiability, and instance checking are decidable in \exptime,
  and actually \exptime-complete.
\end{theorem}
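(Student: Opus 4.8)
The plan is to establish the \exptime upper bound by a careful adaptation of the knot-based query-answering technique of \citeN{OrSE08} (as refined in \citeN{ELOS09b,Orti10}), and to obtain \exptime-hardness from the fact that \ALCHdd already subsumes \ALC. The hardness direction is immediate: plain \ALC concept satisfiability w.r.t.\ a TBox is \exptime-hard, and an \ALC TBox is trivially an \ALCHdd TBox (with empty $\rsig$-hierarchy, empty $\fsig$, and no datatype constraints), so all three problems inherit the lower bound; moreover KB satisfiability and instance checking are mutually reducible to concept satisfiability w.r.t.\ a TBox in the usual way (instance checking $K \models C(o)$ reduces to unsatisfiability of $K \cup \{\NOT C(o)\}$, and a nonempty ABox can be internalized up to the addition of fresh nominals-as-concept-names plus ``spy'' axioms, or handled directly by the knot construction seeded with the ABox individuals). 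So the real content is the upper bound.

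For the upper bound I would proceed as follows. First, preprocess: put the TBox in a normal form (negation normal form for concepts, and internalize all concept inclusions into a single ``universal'' concept $C_T$ pushed under every role, à la \ALC internalization), and fold the role and feature hierarchies in by computing the reflexive-transitive closures $\ISA^*$ over $\rsig$ and $\fsig$, so that a value restriction $\ALL{R}{C}$ effectively propagates along all $R' \ISA^* R$; role/feature disjointness assertions become local clash conditions. Second, observe that \ALCHdd enjoys a forest-model property: every satisfiable KB has a model consisting of the ABox individuals with arbitrary role edges among them, from each of which hangs a tree of anonymous objects, and each anonymous object carries at most one $F$-value per feature $F$ (functionality). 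The datatype side is entirely \emph{local}: the set of feature restrictions $\SOME{F}{\dtype}$ and negated ones asserted at a single object $x$ imposes a conjunction of unary datatype constraints on the (unique, if it exists) $F$-value of $x$; consistency of this conjunction is exactly a $\type_F$-satisfiability check, which by hypothesis is in \exptime, and can be compiled into the ``type'' of $x$ before the relational reasoning even starts. Third, I would define \emph{knots}: a knot is a node labelled by a set of concepts (a ``type'' — a Hintikka-like set closed under the TBox and under the subconcept relation, consistent with the datatype checks and the disjointness assertions), together with a bounded set of labelled outgoing edges to child types, each edge labelled by a role name (and the child labelled consistently with all $\ALL{R}{\cdot}$ at the parent, along $\ISA^*$), and with the feature-successors recorded. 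The number of candidate types is singly exponential in $|T| + |\types\text{-vocabulary}|$, and a knot has at most polynomially many children, so the set of all \emph{well-formed} knots is singly exponential.

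Fourth — and this is the heart — I would run the elimination procedure of \citeN{OrSE08}: start with all well-formed knots, and repeatedly discard any knot each of whose existential requirements ($\SOME{R}{C}$ in the root type, and each feature-definedness requirement) cannot be matched by an already-surviving knot rooted at an appropriate child type. This fixpoint is reached after at most exponentially many rounds, each doing exponential work, so overall \exptime; the KB (resp.\ concept $C$ w.r.t.\ $T$) is satisfiable iff some surviving knot has a root type containing the ABox seed (resp.\ containing $C$). Soundness and completeness follow the standard pattern: surviving knots can be glued into a genuine forest model (the datatype values at each node are chosen by invoking the $\type_F$-satisfiability \emph{witness} for the local conjunction — here one must be a little careful that $\type_F$-satisfiability returning ``yes'' indeed yields a concrete value, which holds for all the numeric and string datatypes of $\types$ since their facets are comparison predicates), and conversely any model can be chopped into well-formed knots that survive elimination. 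The main obstacle I anticipate is precisely the bookkeeping at the interface between the relational and datatype layers: ensuring that feature \emph{inclusions} $F_1 \ISA F_2$ (which force the $F_1$-value, when defined, to coincide with the $F_2$-value, so the \emph{union} of the datatype constraints from $F_1$- and $F_2$-restrictions must be jointly $\type$-satisfiable — and this only makes sense if $\type_{F_1} = \type_{F_2}$, which I would either assume as a well-formedness condition on the signature or enforce syntactically) and feature \emph{disjointness} interact correctly with functionality and with the $\UNDEF{F}$ concept, and that all of this is still computable within a single $\type$-satisfiability oracle call per node-type. Once the local datatype type of a node is correctly precompiled, the rest is a routine instantiation of the knot elimination argument, and the \exptime bound is manifest.
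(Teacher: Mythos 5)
Your proposal is correct and follows essentially the same route as the paper: a knot-based (mosaic) adaptation of \citeN{OrSE08}, with Hintikka-style types precompiled against local $\type$-satisfiability checks, an exponential bound on the number of knots, a fixpoint elimination of knots whose successors cannot be realized, and the \exptime lower bound inherited from \ALC. The feature-inclusion bookkeeping you flag as the main obstacle is resolved in the paper exactly as you suggest, by grouping features that must share a value into a single ``feature-type'' paired with one jointly satisfiable set of datatype expressions; your cosmetic variations (TBox internalization and precomputed hierarchy closures versus folding the axioms directly into the type definitions) do not change the argument.
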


\begin{proof}
It is well known that a concept $C$ is satisfiable with respect to a TBox $T$
iff the KB $\tup{\sig\cup\{N_n\},T\cup\{N_n\ISA C\},\{N_n(o_n)\}}$ is
satisfiable, where $N_n$ is a fresh concept not appearing in $T$, and $o_n$ is
a fresh object (see, e.g., \citeNP{BCMNP07}).
Also, an object $o$ is an instance of a concept $C$ with respect to a KB
$K=\tup{\sig,T,A}$ iff the KB
$\tup{\sig\cup\{N_n\},T\cup\{N_n\ISA \NOT C\},A\cup\{N_n(o)\}}$ is
unsatisfiable, where $N_n$ is a fresh concept name. (Similarly for role and
feature instance checking, exploiting the fact that in the TBox we can express
role and feature disjointness.)
Hence, both concept satisfiability and instance checking can be polynomially
reduced to KB satisfiability, and we need to consider only the latter problem.

In the rest of the proof we show how to check the satisfiability of an \ALCHdd
KB $K=\tup{\sig,T,A}$.  We make use of a variation of the \emph{mosaic}
technique commonly adopted in modal logics \cite{Neme86}, and which is based on
the search for small components of an interpretation that can be composed to
construct a model of a given KB.  Specifically, we borrow and adapt to our
needs the technique based on knots introduced for query answering in expressive
DLs by \citeN{OrSE08}, and later refined by \citeN{ELOS09b} and \citeN{Orti10}.

As a first step, for each object $o$ such that $F(o,\cv{v})\in A$, for some
$F\in\fsig$ and value $\cv{v}$, we modify $K$ as follows:
\begin{inparaenum}[\itshape (i)]
\item we add to $\sig$ a fresh concept name $N_o$;
\item we remove from $A$ the assertion $F(o,\cv{v})$, and replace it with the
  assertion $N_o(o)$;
\item we add to the TBox the concept inclusion
  $N_o\ISA \SOME{F}{\{\cv{v}\}}$.
\end{inparaenum}
Hence, in the following, we assume that the ABox contains only membership
assertions for concepts and roles (and not for features).  We also assume
w.l.o.g.\ that all concepts appearing in $T$ are in \emph{negation-normal form}
(NNF), i.e., negation has been pushed inside so as to appear only in front of
concept names and as difference inside datatypes.  Indeed, as is well known,
one can convert every concept of \ALC into an equivalent one in NNF by
exploiting De Morgan's laws and the duality between qualified existential
restriction and value restriction.  Moreover, as we have observed above,
$\NOT\SOME{F}{\dtype}$ is equivalent to
$\UNDEF{F}\OR\SOME{F}{(\type_F\setminus\dtype)}$, and $\NOT\UNDEF{F}$ is
equivalent to $\SOME{F}{\type_F}$.
Finally, we consider $\bot$ as an abbreviation for $A\AND \NOT A$, and $\top$
as an abbreviation for $A\OR \NOT A$, for some concept name $A\in\csig$.

We use $\cl{K}$ to denote the smallest set of concepts and objects that
contains every concept and every object in $K$ and that is closed under
sub-expressions and negation in NNF (denoted~$\NOTnnf$) applied to concepts.
Moreover, for each concrete domain $\type\in\types$, we define the set
$\Gamma_{\type}$ of \emph{$\type$-expressions used in $K$} as
\[
  \Gamma_{\type} = \{E \mid \SOME{F}{E} \text{ occurs in } T \text{ for some }
  F\in\fsig \text{ s.t.\ } \type_F=\type \}.
\]
Adapting a definition by \citeN{ELOS09b}, we define now suitable forms of
\emph{types}:
\begin{itemize}
\item A \emph{concept-type} for $K$ is a set $\tau\subseteq\cl{K}$ that
  contains at most one object and such that, for all concepts
  $C_1,C_2\in\cl{K}$:
  \begin{itemize}[--]
  \item if $C_1\in\tau$, then $\NOTnnf C_1\notin\tau$;
  \item if $C_1\AND C_2\in\tau$, then $\{C_1,C_2\}\subseteq\tau$;
  \item if $C_1\OR C_2\in\tau$, then $C_1\in\tau$ or $C_2\in\tau$;
  \item if $C_1\ISA C_2\in T$, then $\NOTnnf C_1\in\tau$ or $C_2\in\tau$;
  \item if $N(o)\in A$, then $o\notin\tau$ or $N\in\tau$.
  \end{itemize}
\item For each $\type\in\types$, a \emph{$\type$-type} is a set
  $\tau\subseteq\Gamma_{\type}$ such that $\bigwedge_{E\in\tau}E(x)$ is
  satisfiable in $\type$.
\item A \emph{role-type} for $K$ is a set $\rho\subseteq\rsig$ such that, for
  all $R_1,R_2\in\rsig$:
  \begin{itemize}[--]
  \item if $R_1\ISA R_2\in T$, then $R_1\notin\rho$ or $R_2\in\rho$;
  \item if $R_1\ISA \NOT R_2\in T$, then $R_1\notin\rho$ or $R_2\notin\rho$.
  \end{itemize}
\item A \emph{feature-type} for $K$ is a set $\rho\subseteq\fsig$ such that,
  for all $F_1,F_2\in\fsig$:
  \begin{itemize}[--]
  \item if $F_1\ISA F_2\in T$, then $F_1\notin\rho$ or $F_2\in\rho$;
  \item if $F_1\ISA \NOT F_2\in T$, then $F_1\notin\rho$ or $F_2\notin\rho$.
  \end{itemize}
\end{itemize}

We use the different forms of types to define \emph{knots} for $K$, each of
which can be viewed as a tree of depth $\leq 1$: the root represents an object
labeled with a subset of $\cl{K}$; each leaf represents either an object
labeled with a subset of $\cl{K}$, or a value of a datatype $\type$, labeled
with a satisfiable conjunction of datatype expression for $\type$; and each
edge is labeled either with a role-type or with a feature-type.
Formally, a \emph{knot} is a pair $\kappa=\tup{\tau,S}$ that consists of a
concept-type $\tau$ for $K$ (called \emph{root-type}), and a set $S$ with
$\card{S}\leq \card{\cl{K}}$.  The set $S$ consists of pairs
$\tup{\rho,\tau'}$, where either $\rho$ is a role-type and $\tau'$ a
concept-type for $K$, or $\rho$ is a feature-type and $\tau'$ a $\type$-type
(for some $\type\in\types$) for $K$.

We first define local consistency conditions for knots, ensuring that the knot
does not contain internal contradictions.  A knot $\kappa=\tup{\tau,S}$ is
\emph{$K$-consistent} if the following conditions hold:
\begin{itemize}
\item if $\SOME{R}{C}\in\tau$, then there is some $\tup{\rho,\tau'}\in S$ such
  that $R\in\rho$ and $C\in\tau'$;
\item if $\ALL{R}{C}\in\tau$, then for all $\tup{\rho,\tau'}\in S$ with
  $R\in\rho$, we have that $C\in\tau'$;
\item if $\SOME{F}{E}\in\tau$, then there is a unique $\tup{\rho,\tau'}\in S$
  such that $F\in\rho$, and moreover $E\in\tau'$;
\item if ${\UNDEF{F}}\in\tau$, then there is no $\tup{\rho,\tau'}\in S$ such
  that $F\in\rho$;
\item if $o\in\tau$ and $R(o,o')\in A$, then there is a unique
  $\tup{\rho,\tau'}\in S$ such that $o'\in\tau'$, and moreover $R\in\rho$.
\end{itemize}
A knot that is $K$-consistent respects the constraints that $T$ and $A$ impose
locally, but this does not ensure that the knot can be part of a model of $K$,
as there could be non-local constraints that cannot be satisfied in a model in
which the knot is present.  Therefore, we introduce a global condition that
ensures that a set of knots can be combined in a model of $K$.
Given a set $\Psi$ of knots, a knot $\tup{\tau,S}\in\Psi$ is
\emph{$\Psi$-consistent} if for each $\tup{\rho,\tau'}\in S$ there is a knot
$\tup{\tau',S'}\in\Psi$, for some $S'$.  The set $\Psi$ is \emph{$K$-coherent}
if
\begin{inparaenum}[\itshape (i)]
\item each knot in $\Psi$ is both $K$-consistent and $\Psi$-consistent, and
\item for each object $o$ appearing in $A$, there is exactly one knot
  $\tup{\tau,S}\in\Psi$ such that $o\in\tau$.
\end{inparaenum}

We show that $K$ is satisfiable iff there exists a $K$-coherent set of knots.
For the ``if'' direction, we construct a model $\I$ of $K$ from a $K$-coherent
set $\Psi$ of knots.  By item~\textit{(ii)} in the definition of $K$-coherence,
for each object $o$ appearing in $A$, $\Psi$ contains exactly one knot
$\kappa_o$ whose root-type satisfies the local conditions imposed by $K$ on
$o$.  We start by introducing such knots, and we repeatedly connect suitable
successor knots $\tup{\tau',S'}$ to the leaves of the trees that have
concept-type or $\type$-type (for a suitable $\type\in\types$) equal to
$\tau'$.  The existence of such successors is guaranteed by the fact that all
knots in $\Psi$ are $\Psi$-consistent.  Notice also that, since for an object
$o'$ the knot that has $o'$ in its concept-type is unique, in this way we will
introduce in the model exactly one knot (i.e., object) representing $o'$. It is
easy to verify that the resulting interpretation is indeed a model of $K$.
For the ``only-if'' direction, consider a model $\I=\inter$ of $K$, and define
the following mapping $\mu$ that assigns to each object $o\in\dom$ a knot
$\mu(o)=\tup{\tau_o,S_o}$, where:
\begin{itemize}
\item $\tau_o=\{C\in\cl{K} \mid o\in\Int{C}\}$, and
\item $S_o$ is obtained as follows:
  \begin{itemize}[--]
  \item for each object $o'\in\dom$ such that $\tup{o,o'}\in\Int{R}$, for some
    role $R\in\rsig$, the set $S_o$ contains $\tup{\rho_{o'},\tau_{o'}}$, where
    $\rho_{o'}=\{R\in\rsig \mid \tup{o,o'}\in\Int{R}\}$, and
    $\tau_{o'}=\{C\in\cl{K} \mid o'\in\Int{C}\}$;
  \item for each value $\cv{v}\in\dom_{\type}$, for some $\type\in\types$, such
    that $\tup{o,\cv{v}}\in\Int{F}$, for some feature $F\in\fsig$, the set
    $S_o$ contains $\tup{\rho_{\cv{v}},\tau_{\cv{v}}}$, where
    $\rho_{\cv{v}}=\{F\mid \tup{o,\cv{v}}\in\Int{F}\}$, and
    $\tau_{\cv{v}}=\{E\in\Gamma_{\type} \mid \cv{v}\in E\}$.
  \end{itemize}
\end{itemize}
It is straightforward to check that $\Psi=\{\mu(o)\mid o\in\dom\}$ is a
$K$-coherent set of knots.

It remains to show that the existence of a $K$-coherent set of knots can be
verified in time exponential in the size of $K$.  Let $c=\card{\cl{K}}$,
$r=\card{\rsig}$, and $f=\card{\fsig}$.
Notice that $\cl{K}$ contains a number of concepts that is linear in the size
of $K$.
Then the number of knots for $K$ is bounded by $2^c\cdot (2^r+2^f)\cdot 2^c$,
i.e., by an exponential in the size of $K$.  Moreover, each knot $\kappa$ is of
size polynomial in the size of $K$, and one can check in time polynomial in the
combined sizes of $\kappa$ and $K$ whether $\kappa$ is $K$-consistent.  The
number of $K$-coherent sets of knots is doubly exponential in the size of $K$.
However, the existence of a $K$-coherent set of knots can be checked in time
single exponential in the size of $K$ as follows.
First, we say that a knot $\tup{\tau,S'}$ is a \emph{reduct} of a knot
$\tup{\tau,S}$ if there are enumerations
$S=\{\tup{\rho_1,\tau_1},\ldots,\tup{\rho_\ell,\tau_\ell}\}$ and
$S'=\{\tup{\rho'_1,\tau'_1},\ldots,\tup{\rho'_h,\tau'_h}\}$ such that
\begin{inparaenum}[\itshape (i)]
\item $h\leq\ell$,
\item $\rho'_i\cup\tau'_i\subseteq\rho_i\cup\tau_i$ for all
  $i\in\{1,\ldots,h\}$, and
\item $h<\ell$, or $\rho'_i\cup\tau'_i\subset\rho_i\cup\tau_i$ for some
  $i\in\{1,\ldots,h\}$.
\end{inparaenum}
A knot $\kappa$ is \emph{$K$-min-consistent} if it is $K$-consistent and no
reduct of $\kappa$ is $K$-consistent.  Intuitively, each $K$-min-consistent
knot is a self-contained model building block for minimal models of $K$.
With this notion in place, we construct a $K$-coherent set $\Psi$, all of whose
knots are $K$-min-consistent.  To do so, we enumerate, for each object $o$
appearing in $K$, over the knots $\tup{\tau,S}$ that are $K$-min-consistent and
such that $o\in\tau$.  Specifically, for each $o$, we exhaustively consider for
$\tau$ all subsets of $\cl{K}$ containing $o$, and extend both $\tau$ and $S$
so as to satisfy the conditions of $K$-consistency.  $K$-min-consistency of the
obtained $\tup{\tau,S}$ is then checked by considering all reducts of
$\tup{\tau,S}$ and verifying that none is $K$-consistent.
If $K$ contains $n$ objects, there are at most $c^n$ sets consisting of $n$
knots that we have to consider in the above enumeration.  From each such set
$\Psi_{\mathit{obj}}$, we then try to construct a $K$-coherent set of knots as
follows: we first construct a set $\Psi_{\mathit{obj}}^K$ of knots by adding to
$\Psi_{\mathit{obj}}$ \emph{all} those knots $\tup{\tau,S}$ for which $\tau$
does not contain any object and that are $K$-min-consistent. (Such knots are
generated similarly to the ones in the above enumeration, except that we
exhaustively consider all subsets of $\cl{K}$ \emph{not} containing any
object.)  We then repeatedly remove from $\Psi_{\mathit{obj}}^K$ those knots
that are not $\Psi_{\mathit{obj}}^K$-consistent.  If we are not forced to
remove from $\Psi_{\mathit{obj}}^K$ any of the knots initially in
$\Psi_{\mathit{obj}}$ (i.e., whose $\tau$ contains an object), then the
resulting set of knots is $K$-coherent.  Instead, if we are forced to do so for
each set $\Psi_{\mathit{obj}}$ in the enumeration, then there is no
$K$-coherent set of knots.
Given that there are $c^n$ sets in the enumeration, and that for each such set
the check for the existence of a $K$-coherent set of knots requires to iterate
over exponentially many knots, the overall algorithm runs in time single
exponential in the size of $K$.
Together with the well-known \exptime lower-bound for reasoning in \ALC, this
shows the claim.
\end{proof}

\smallskip
\noindent
\textbf{Rich KBs.~}
We also consider rich KBs where axioms are specified in full \fold (and the
signature is that of a \fold theory). We call such KBs \emph{\fold KBs}.

\subsection{DMN Decision Table}
\label{sec:dmn}

To capture the business logic of a simple decision table, we rely on the
\dmn~1.1 standard, and in particular \dmn~1.1 decision tables expressed in the
\sfeel language.

As for single decision tables, we resort to the formal definitions introduced
by \citeN{CDL16} to capture the standard, but we update them so as to target
\dmn~1.1. We concentrate here on single-hit policies only, that is, policies
that define an interpretation of decision tables for which at most one rule
triggers and produces an output for an arbitrary configuration of the input
attributes.  This is because in the case of multiple-hit policies, multiple
output values may be collected at once in a list. However, \sfeel does not
provide list-handling constructs (which are instead covered by the full \feel),
and hence only single-hit policies combine well with \sfeel within a \drg.  As
for single-hit policies, we consider:
\begin{itemize}
\item \emph{unique hit policy} ($\upol$) -- indicating that rules do not
  overlap;
\item \emph{any hit policy} ($\apol$) -- indicating that whenever multiple
  overlapping rules simultaneously trigger, they compute exactly the same
  output values;
\item \emph{priority hit policy} ($\ppol$) -- indicating that whenever multiple
  overlapping rules simultaneously trigger, the matching rule with highest
  output priority is considered (details are given next).
\end{itemize}
We do not consider the \emph{first policy}, as it is considered bad practice in
the standard, and from the technical point of view it can be simulated using
the priority hit policy.

An \emph{S-FEEL DMN decision table} $\dt$ (called simply \emph{decision table}
in the following) is a tuple
$\tup{\tname,\tin,\tout,\atype,\infacet,\outrange,\outdef,\trules,\thit}$,
where:
\begin{itemize}
\item $\tname$ is the \emph{table name}.
\item $\tin$ and $\tout$ are disjoint, finite ordered sets of \emph{input} and
  \emph{output attributes}, respectively.
\item $\atype: \tin \uplus \tout \rightarrow \types$ is a \emph{typing
   function} that associates each input/output attribute to its corresponding
  datatype.\footnote{We use $\uplus$ to denote the \emph{disjoint union} between two sets.}
\item $\infacet$ is a \emph{facet function} that associates each input
  attribute $\attr{a} \in \tin$ to an S-FEEL condition over $\atype(\attr{a})$
  (see below), which identifies the allowed input values for $\attr{s}$.
\item $\outrange$ is an \emph{output range} function that associates each
  output attribute $\attr{b} \in \tout$ to an $n$-tuple over
  $\atype(\attr{b})^n$ of possible output values (equipped with an ordering).
\item $\outdef: \tout \rightarrow \types$ is a \emph{default assignment}
  (partial) function mapping some output attributes to corresponding default
  values.
\item $\trules$ is a finite set $\set{r_1,\ldots,r_p}$ of \emph{rules}. Each
  rule $r_k$ is a pair $\tup{\incond_k,\outval_k}$, where $\incond_k$ is an
  \emph{input entry function} that associates each input attribute
  $\attr{a} \in \tin$ to an S-FEEL condition over $\atype(\attr{a})$, and
  $\outval_k$ is an \emph{output entry function} that associates each output
  attribute $\attr{b} \in \tout$ to an object in $\atype(\attr{b})$.
  \item $\thit \in \set{\upol,\apol,\ppol}$ 
is the \emph{(single) hit policy indicator} for the decision table.
\end{itemize}
Notice that the ordering induced by the attributes in $\tout$, followed,
attribute by attribute, by the ordering of values in $\outrange$, is the one
upon which the priority hit indicator is defined, where the ordering is
interpreted by decreasing priority. Notice that rules with exactly the same
output values have the same priority, but this is harmless since they produce
the same result. To simplify the treatment, we introduce a total ordering
$\higherp$ over rules that respects the partial ordering induced by the output
priority, and that fixes an (arbitrary) ordering over equal-priority rules.

In the following, we use a dot notation to single out an element of a decision
table.  For example, $\get{\dt}{\tin}$ denotes the set of input attributes for
decision table $\dt$.

An \emph{(\sfeel) condition} $\cond$ over type $\type$ is inductively defined
as follows:
\begin{itemize}
\item ``$\anycond$" is the \emph{any value} condition (i.e., it matches every
  object in $\domain_\type$);
\item given a constant $\cv{v}$, expressions ``$\cv{v}$" and
  ``$\mathtt{not(}\cv{v}\mathtt{)}$" are \sfeel conditions respectively
  denoting that the value shall and shall not match with $\cv{v}$;
\item if $\type$ is numerical, given two numbers
  $\cv{v}_1, \cv{v}_2 \in \domain_\type$, the interval expressions
  ``$[\cv{v}_1, \cv{v}_2]$", ``$[\cv{v}_1, \cv{v}_2)$",
  ``$(\cv{v}_1, \cv{v}_2]$", and ``$(\cv{v}_1, \cv{v}_2)$" are S-FEEL
  conditions (interpreted in the standard way as closed, open, and half-open intervals);
\item given two S-FEEL conditions $\cond_1$ and $\cond_2$,
  ``$\cond_1,\cond_2$'' is a disjunctive S-FEEL condition that evaluates to
  true for a value $\cv{v} \in \domain_\type$ if either $\cond_1$ or $\cond_2$
  evaluates to true for $\cv{v}$.
\end{itemize}

\begin{example}
\label{ex:dmn}
\newcommand{\curr}{\ensuremath{\mathtt{today}}}
\newcommand{\yesenter}{\ensuremath{\mathtt{y}}}
\newcommand{\noenter}{\ensuremath{\mathtt{n}}}

\newcommand{\dmntext}[1]{\phantom{$\mathtt{[d}$}#1\phantom{\yesenter$]$}}

\tikzset{
table/.style={
  matrix of nodes,
  row sep=-\pgflinewidth,
  column sep=-\pgflinewidth,
  nodes={
    rectangle,
    draw=black,
    minimum width=1cm,
    minimum height=5mm,
    align=center },
  text depth=0.25ex,
  text height=1ex,
  nodes in empty cells
  },
  dmn/.style={
    matrix of nodes,
    row sep=-\pgflinewidth,
    column sep=-\pgflinewidth,
    nodes={
      rectangle,
      draw=black,
      text width=19mm,
      minimum height=5mm,
      align=center },
    nodes in empty cells,
  },
  dmnhit/.style={
    rectangle,
    draw,
    minimum height=15.2mm,
    minimum width=7.1mm,
    xshift=1.3mm,
    yshift=-\pgflinewidth
  },
  dmnrulen/.style={
    matrix of nodes,
    row sep=-\pgflinewidth,
    column sep=-\pgflinewidth,
    nodes={
      rectangle,
      draw=black,
      text width=5mm,
      minimum height=5mm,
      align=center },
    nodes in empty cells,
  },
}

\begin{table}[t]
\begin{tikzpicture}[node distance =0pt and 0.5cm]

\matrix[dmn]
  (in) {
  |[fill=incolor,minimum height=1cm]|
  Cer.~Exp.\qquad\qquad\qquad (date)
  &
  |[fill=incolor,minimum height=1cm]|
  Length \qquad\qquad\qquad (m)
  &
  |[fill=incolor,minimum height=1cm]|
  Draft \qquad\qquad\qquad (m)
  &
  |[fill=incolor,minimum height=1cm]|
  Capacity \qquad\qquad\qquad (TEU)
  &
  |[fill=incolor,minimum height=1cm]|
  Cargo \qquad\qquad\qquad (mg/cm$^2$)
  \\
  \dmntext{$\geq 0$} &
  \dmntext{$\geq 0$} &
  \dmntext{$\geq 0$} &
  \dmntext{$\geq 0$} &
  \dmntext{$\geq 0$} \\
};

\matrix[dmn,anchor=north west,xshift=-2mm,yshift=-.1mm]
  (out)
  at (in.north east) {
  |[fill=outcolor,minimum height=1cm]|
  Enter\qquad\qquad\qquad \phantom{.}
  \\
  \dmntext{\yesenter, \noenter}\\
};

\node[dmnhit,
      anchor=east,
      yshift=\pgflinewidth]
  (hit)
  at (in.west)
  {\textbf{U}};

\node[rectangle,
      draw,
      anchor=south west,
      minimum height=8mm,
      minimum width=35mm,
      yshift=-\pgflinewidth]
  (title)
  at (hit.north west)
  {\textbf{Ship Clearance}};

\matrix[dmn,anchor=north west,yshift=1.9mm]
  (rules)
  at (in.south west) {
    \dmntext{$\leq \curr$}
    &
    \dmntext{$\anycond$}
    &
    \dmntext{$\anycond$}
    &
    \dmntext{$\anycond$}
    &
    \dmntext{$\anycond$}
    \\
    \dmntext{$> \curr$}
    &
    \dmntext{$< 260$}
    &
    \dmntext{$< 10$}
    &
    \dmntext{$< 1000$}
    &
    \dmntext{$\anycond$}
    \\
    \dmntext{$> \curr$}
    &
    \dmntext{$< 260$}
    &
    \dmntext{$< 10$}
    &
    \dmntext{$\geq 1000$}
    &
    \dmntext{$\anycond$}
    \\
    \dmntext{$> \curr$}
    &
    \dmntext{$< 260$}
    &
    \dmntext{$[10,12]$}
    &
    \dmntext{$< 4000$}
    &
    \dmntext{$\leq 0.75$}
    \\
    \dmntext{$> \curr$}
    &
    \dmntext{$< 260$}
    &
    \dmntext{$[10,12]$}
    &
    \dmntext{$< 4000$}
    &
    \dmntext{$>0.75$}
    \\
    \dmntext{$> \curr$}
    &
    \dmntext{$[260,320)$}
    &
    \dmntext{$(10,13]$}
    &
    \dmntext{$< 6000$}
    &
    \dmntext{$\leq 0.5$}
    \\
    \dmntext{$> \curr$}
    &
    \dmntext{$[260,320)$}
    &
    \dmntext{$(10,13]$}
    &
    \dmntext{$< 6000$}
    &
    \dmntext{$> 0.5$}
    \\
    \dmntext{$> \curr$}
    &
    \dmntext{$[320,400)$}
    &
    \dmntext{$\geq 13$}
    &
    \dmntext{$> 4000$}
    &
    \dmntext{$\leq 0.25$}
    \\
    \dmntext{$> \curr$}
    &
    \dmntext{$[320,400)$}
    &
    \dmntext{$\geq 13$}
    &
    \dmntext{$> 4000$}
    &
    \dmntext{$> 0.25$}
    \\
};

\matrix[dmn,
        anchor=north west,
        yshift=1.9mm]
  (ruleconclusions)
  at (out.south west) {
  \dmntext{\noenter} \\
  \dmntext{\yesenter} \\
  \dmntext{\noenter} \\
  \dmntext{\yesenter} \\
  \dmntext{\noenter} \\
  \dmntext{\yesenter} \\
  \dmntext{\noenter} \\
  \dmntext{\yesenter} \\
  \dmntext{\noenter} \\
};

\matrix[dmnrulen,
        anchor=north east,
        xshift=2.4mm]
  (rulenum)
  at (rules.north west) {
    \dmntext{1}\\
    \dmntext{2}\\
    \dmntext{3}\\
    \dmntext{4}\\
    \dmntext{5}\\
    \dmntext{6}\\
    \dmntext{7}\\
    \dmntext{8}\\
    \dmntext{9}\\
};
\end{tikzpicture}
\caption{\dmn representation of the \emph{ship clearance} decision of
 Figure~\ref{fig:dmn-ex}}
\label{tab:dmn-clearance}
\end{table}


\begin{table}
\begin{tikzpicture}[node distance =0pt and 0.5cm]

\matrix[dmn]
  (in) {
  |[fill=incolor,minimum height=1cm]|
  Enter\qquad\qquad\qquad \phantom{(m)}
  &
  |[fill=incolor,minimum height=1cm]|
  Length \qquad\qquad\qquad (m)
  &
  |[fill=incolor,minimum height=1cm]|
  Cargo \qquad\qquad\qquad (mg/cm$^2$)
  \\
  \dmntext{\yesenter,\noenter} &
  \dmntext{$\geq 0$} &
  \dmntext{$\geq 0$} \\
};

\matrix[dmn,anchor=north west,xshift=-2mm,yshift=-\pgflinewidth]
  (out) at (in.north east) {
  |[fill=outcolor,minimum height=1cm,minimum width=4cm]|
  Refuel Area\qquad\qquad\qquad \phantom{.}
  \\
  |[minimum width=4cm]|
  \hspace*{-8mm}\phantom{$\mathtt{[]}$}\underline{\none},~\indoor,~\outdoor\\
};

\node[dmnhit,anchor=east,
      yshift=\pgflinewidth]
  (hit)
  at (in.west)
  {\textbf{U}};

\node[rectangle,
      draw,
      anchor=south west,
      minimum height=8mm,
      minimum width=4cm,
      yshift=-\pgflinewidth]
  (title)
  at (hit.north west)
  {\textbf{Refuel Area Determination}};

\matrix[dmn,anchor=north west,yshift=1.9mm]
  (rules)
  at (in.south west) {
    \dmntext{\noenter}
    &
    \dmntext{$\anycond$}
    &
    \dmntext{$\anycond$}
    \\
    \dmntext{\yesenter}
    &
    \dmntext{$\leq 350$}
    &
    \dmntext{$\anycond$}
    \\
    \dmntext{\yesenter}
    &
   \dmntext{$> 350$}
    &
    \dmntext{$\leq 0.3$}
    \\
    \dmntext{\yesenter}
    &
    \dmntext{$> 350$}
    &
    \dmntext{$>0.3$}
    \\
};

\matrix[dmn,
        anchor=north west,
        yshift=1.9mm]
  (ruleconclusions)
  at (out.south west) {
  |[minimum width=4cm]|\dmntext{\none} \\
  |[minimum width=4cm]|\dmntext{\indoor} \\
  |[minimum width=4cm]|\dmntext{\indoor} \\
  |[minimum width=4cm]|\dmntext{\outdoor} \\
};

\matrix[dmnrulen,
        anchor=north east,
        xshift=2.4mm]
  (rulenum)
  at (rules.north west) {
    \dmntext{1}\\
    \dmntext{2}\\
    \dmntext{3}\\
    \dmntext{4}\\
};
\end{tikzpicture}
\caption{\dmn representation of the \emph{refuel area determination} decision
 of Figure~\ref{fig:dmn-ex}}
\label{tab:dmn-refuel}
\end{table}


Tables~\ref{tab:dmn-clearance} and~\ref{tab:dmn-refuel} respectively show the
\dmn encoding of the \emph{ship clearance} and \emph{refuel area determination}
decision tables of our case study (cf.~Section~\ref{sec:case}).  The tabular
representation of decision tables obeys to the following standard conventions.
The first two rows (below the table title) indicate the table meta-information.
In particular, the leftmost cell reports the hit indicator, which, in both
tables, corresponds to unique hit.  Blue-colored cells (i.e., all other cells
but the rightmost one), together with the cells below, respectively model the
input attributes of the decision table, and which values they may assume.  This
latter aspect is captured by facets over their corresponding datatypes.  In
Table~\ref{tab:dmn-clearance}, the input attributes are:
\begin{inparaenum}[\itshape (i)]
\item the certificate expiration date,
\item the length,
\item the size,
\item the capacity, and
\item the amount of cargo residuals
\end{inparaenum}
of a ship.  Such attributes are nonnegative real numbers; this is captured by
typing them as reals, adding restriction ``$\geq 0$'' as facet.  The rightmost,
red cell represents the output attribute. In both cases, there is only one
output attribute, of type string.  The cell below enumerates the possible
output values produced by the decision table, in descending priority order.  If
a default output is defined, it is underlined.  This is the case for the \none
string in Table~\ref{tab:dmn-refuel}.

Every other row models a rule.  The intuitive interpretation of such rules
relies on the usual ``if \ldots then \ldots'' pattern. For example, the first
rule of Table~\ref{tab:dmn-clearance} states that, if the certificate of the
ship is expired, then the ship cannot enter the port, that is, the \emph{enter}
output attribute is set to $\noenter$ (regardless of the other input
attributes).  The second rule, instead, states that, if the ship has a valid
certificate, a length shorter than 260\,m, a draft smaller than 10\,m, and a
capacity smaller than 1000\,TEU, then the ship is allowed to enter the port
(regardless of the cargo residuals it carries). Other rules are interpreted
similarly.
\end{example}

\subsection{Decision Requirements Graphs}
\label{sec:dmn-drg}

We now formally define the notion of \drg in accordance with \dmn~1.1. As
pointed out before, we do not consider the contribution of authorities, but we
accommodate business knowledge models. Since they are considered external
elements to \drgs, in this phase they are simply introduced without a further
definition.  We will come back to this in Section~\ref{sec:dkb}.

A \emph{\drg} is a tuple $\tup{\tin,\infacet,\dts,\odts,\km,{\oimap},{\ireq}}$,
where:
\begin{itemize}
\item $\tin$ is a set of \emph{input data}, and $\infacet$ is a \emph{facet
   function} defined over $\tin$.
\item $\dts$ is a set of \emph{decision tables}, as defined in
  Section~\ref{sec:dmn}, and $\odts \subseteq \dts$ are the \emph{output
   decision tables}.  We assume that each decision table in $\dts$ has a
  distinct name that can be used to unambiguously refer to it within the \drg.
\item $\km$ is a set of \emph{business knowledge models}.
\item $\oimap: (\tin \cup \bigcup_{\dt \in \dts} \get{\dt}{\tout}) \times
  \bigcup_{\dt \in \dts} \get{\dt}{\tin} $ is an \emph{information flow}, that
  is, an \emph{output-unambiguous relation} connecting input data and output
  attributes of the decision tables in $\dts$ to input attributes of decision
  tables in $\dts$, where output-unambiguity is defined as follows:
  \begin{itemize}[$\bullet$]
  \item for every input attribute
    $\attr{a} \in \bigcup_{\dt \in \dts} \get{\dt}{\tin}$, there is at most one
    element $e$ such that $e \oimap \attr{a}$.
  \end{itemize}
\item $\ireq \subseteq \tin \times \dts \cup \dts \times \dts \cup \km \times
  \km \cup \km \times \dts$ is a set of \emph{information requirements}, relating 
  knowledge models to decision tables, knowledge models to other knowledge models,
  input attributes of the \drg to decision tables, and decision tables
  to other decision tables. Information requirements must guarantee \emph{compatibility} with $\oimap$,
  defined as follows:
  \begin{itemize}[$\bullet$]
  \item for every $\attr{i} \in \tin$ and every $\dt \in \dts$, we have
    $\attr{i} \ireq \dt$ if and only if there exists an attribute
    $\attr{b} \in \get{\dt}{\tin}$ such that $\attr{i} \oimap \attr{b}$;
  \item for every $\dt_o,\dt_i \in \dts$, we have $\dt_o \ireq \dt_i$ if and
    only if there exist attribute $\attr{b} \in \get{\dt_o}{\tout}$ and
    attribute $\attr{a} \in \get{\dt_i}{\tin}$ such that
    $\attr{b} \oimap \attr{a}$.
  \end{itemize}
\end{itemize}
In accordance with the standard, the directed graph induced by $\ireq$ over the
decision tables in $\dts$ must be \emph{acyclic}.  This ensures that there are
well-defined dependencies among decision tables.  While the standard introduces
information requirement variables to capture the data flow across decision
tables, here we opt for the simpler mathematical formalization based on the
information flow relation.

Also for \drgs, we employ a dot notation to single out their constitutive
elements (when clear from the context, though, we simply use $\oimap$ and
$\ireq$ directly). Given a \drg $\adrg$, we identify the set of \emph{free
 inputs} of $\adrg$, written $\freein{\adrg}$, as the set of input data of
$\adrg$ together with the input attributes of tables in $\adrg$ that are not
pointed by the information flow of $\adrg$:
\[
  \freein{\adrg} = \get{\adrg}{\tin} \cup \set{\attr{a} \mid \attr{a} \in
   \get{\dt}{\tin} \text{ for some } \dt \in \get{\adrg}{\dts},
   \text{ and there is no } x \text{ s.t.~} x \oimap \attr{a}}
\]
where $\oimap$ is the information flow of $\adrg$.  Complementarily, we call
\emph{bound attributes} of $\adrg$, written $\battr{\adrg}$, all the attributes
appearing in $\adrg$ that do not belong to $\freein{\adrg}$.  Such attributes
are all the output attributes used inside the decision tables of $\adrg$, but
also the input attributes that are bound to an incoming information flow.

Finally, we say that $e_1 \oimaptrans e_n$ if there exists a sequence
$\tup{e_2,\ldots,e_{n-1}}$ such that for each $i \in \set{1,\ldots,n-1}$, we
have $e_i \oimap e_{i+1}$.

\begin{example}
  The \drg of our case study, shown in Figure~\ref{fig:dmn-ex}, interconnects
  the input data and the two decision tables of \emph{ship clearance} and
  \emph{refuel area determination}, by setting as information flow the one that
  simply maps input data and output attributes to input attributes sharing the
  same name.  For example, the $\attr{Enter}$ output attribute of \emph{ship
   clearance} is mapped to the $\attr{Enter}$ input attribute of \emph{refuel
   area determination}. 
\end{example}


\section{Semantic Decision Models}
\label{sec:dkb}

We now substantiate the integration between decision logic and background
knowledge, by introducing the notion of \emph{decision knowledge base} (\dkb),
which combines DMN \drgs with \fold knowledge bases, so as to \emph{empower DMN
 with semantics}.

\subsection{Decision Knowledge Bases}

The intuition behind our proposal for integration is to consider the decision
logic as a sort of enrichment of a KB describing the structural aspects of a
domain of interest. In this respect, the \drg is linked to a specific concept
of the KB. The idea is that given an object $o$ that is member of that concept,
$\dt$ inspects the feature of $o$ that matches the input data of the \drg,
triggering the corresponding decision logic. Depending on which rule(s) match,
$\dt$ then dictates which are the values to which $o$ must be connected via
those features that correspond to the output attributes
$\get{\dt}{\tout}$. Hence, the KB and the \drg interact on (some of) the free
inputs of the overall, complex decision, while the output attributes and the
bound inputs are exclusively present in the \drg, and are in fact used to infer
new knowledge about the domain. Since we work under incomplete information, we
also accept \drgs in which not all input attributes are fed by input data or by
the output of other decisions.

Formally, a \emph{decision knowledge base} over datatypes $\types$
($\types$-\dkb, or \dkb for short) is a tuple
$\tup{\sig,\TBox,\adrg,\cbridge,\ABox}$, where:
        \begin{itemize}
        \item $\TBox$ is a \fold IKB with signature $\sig$.
        \item $\adrg$ is a decision table that satisfies the following two typing conditions:
        \begin{compactdesc}
        \item[\textit{\mdseries(free input type compatibility)}] for every binary predicate $P \in \sig$ whose name appears in $\freein{\adrg}$, their datatypes coincide.        \item[\textit{\mdseries(uniqueness of bound attributes)}] For every bound attribute $\attr{a}$ in $\battr{\adrg}$, we have that no predicate $P \in \sig$ corresponds to $\attr{a}$.
        \end{compactdesc}
        \item $\cbridge \in \csig$ is a \emph{bridge concept}, that is, a concept from $\sig$ that links $\TBox$ with $\adrg$.
        \item $\ABox$ is an ABox over the extended signature $\sig \cup \battr{\adrg}$.
        \end{itemize}
            When the focus is on the intensional decision knowledge only, we omit the ABox, and call the tuple $\tup{\sig,\TBox,\adrg,\cbridge}$ intensional \dkb (\idkb).

%

From the notational point of view, we can depict an \idkb $\aidkb = \tup{\sig,\TBox,\adrg,\cbridge}$ by slightly extending the \dmn notation for \drds as follows:
\begin{enumerate}
\item The knowledge base $\TBox$ is represented as a special business knowledge model.
\item Pictorially, this business knowledge model adopts the standard notation, using a small distinctive icon on the top right, and containing an indication about the bridge concept $\cbridge$.
\item There is an information requirement connecting the knowledge base to:
\begin{itemize}
  \item all input data of $\adrg$ that are also used by the knowledge base (thus highlighting the possible interaction between the input data and the background knowledge);
  \item all decisions of $\adrg$ that have at least one (free) input attribute different from all input data, and in common with the knowledge base (thus highlighting possible additional interactions with decision inputs that are not bound within the \drg).
\end{itemize}
\end{enumerate}
Notice that connecting a business knowledge model to input data of a \drg is forbidden by the standard. However, it is essential in our setting, so as to graphically highlight that the knowledge base may interact with certain input data.

\begin{example}
\label{ex:ship-dkb}
By considering our running example, the \dkb for the ship clearance
domain can be obtained by combining the knowledge base of Table~\ref{tab:shiptypes} with the \drg of Figure~\ref{fig:dmn-ex} (whose constitutive decisions are shown in Tables~\ref{tab:dmn-clearance} and \ref{tab:dmn-refuel}), using $\rel{Ship}$ as bridge concept. On the one hand, Table~\ref{tab:shiptypes} introduces different types
of ships, which can be modeled as subtype concepts of the generic concept of
ship, together with a set of axioms constraining the length, draft, and
capacity features depending on the specific subtype
(cf.~Example~\ref{ex:fol-axiom}). On the other hand,
Tables~\ref{tab:dmn-clearance} and \ref{tab:dmn-refuel} extend the signature of
Table~\ref{tab:shiptypes} with four additional features for ships, namely
certificate expiration and cargo, as well as the indication of whether a ship
can enter a port or not, and what its refuel area is. These two last features are produced as output of the
\drg in Figure~\ref{fig:dmn-ex}, and are in fact inferred by applying the \drg on a specific ship.

This \dkb is graphically shown in Figure~\ref{fig:bkm-ex} using the extended \drd notation. Notice how the diagram marks the possible interaction points of the knowledge base and the \drg.
\end{example}

\begin{figure}[t]
  \includegraphics[width=.5\textwidth]{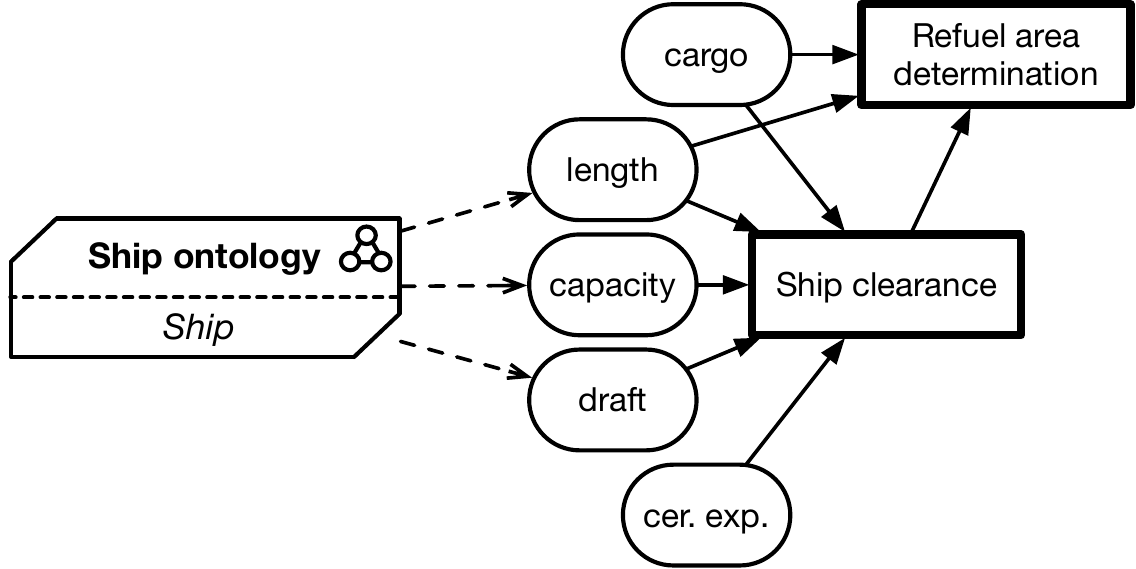}
  \caption{Graphical representation of an \idkb by extending the \dmn notation
   for \drgs}
  \label{fig:bkm-ex}
\end{figure}

\subsection{Formalizing \dkb{s}}
\label{sec:formalization}

From the formal point of view, the integration between a KB and a \drg is obtained by encoding the latter into \fold, consequently enriching the KB
with additional axioms that formally capture the decision logic.  We provide the encoding in this section. Obviously, to encode a \drg we first need to encode the decisions contained therein. To this purpose, we build on the logic-based formalization of DMN introduced in \cite{CDL16}. However, we cannot simply apply it as it is defined in \cite{CDL16}, since it does not follow the ``object-oriented'' approach required to interpret the application of a \drg in the presence of background knowledge. Specifically, that encoding formalizes decisions as formulae relating tuples of input values to tuples of output values, assuming no additional structure. In this work, we need to ``objectify" the approach in \cite{CDL16}, considering decisions as axioms that predicate on the features of a certain object, and that in particular postulate that whenever certain (input) features satisfy given conditions, then the object must be connected to certain other values through corresponding (output) features. This approach is useful to handle the integration with background knowledge, but also to simply interpret the interconnection of multiple decisions into a \drg, making our object-oriented formalization of \dmn decisions and \drgs is of independent interest.

Technically, we introduce an encoding $\tfol$ that translates an \idkb $\aidkb = \tup{\sig,\TBox,\adrg,\cbridge}$ into a corresponding \fold IKB $\tfol(\aidkb)$. The encoding can also be applied to a \dkb, translating its intensional part as before while leaving its extensional part unaltered, i.e., given a \dkb $\adkb = \tup{\sig,\TBox,\adrg,\cbridge,\ABox}$ such that $\tfol(\tup{\sig,\TBox,\adrg,\cbridge}) = \tup{\transl{\sig},\transl{\TBox}}$, we have $\tfol(\adkb) = \tup{\transl{\sig},\transl{\TBox},\ABox}$. We next describe how $\transl{\sig}$ and $\transl{\TBox}$ are actually constructed.

\subsubsection{Encoding of the Signature}
\label{sec:fold-signature}
The signature corresponds to the original signature of $\TBox$, augmented with a set of features that are obtained from the input data and the table attributes mentioned in $\adrg$.
  To avoid potential name clashes coming from repeated attribute names in different decisions, each attribute corresponds to a feature whose name is obtained by concatenating the name of such an attribute with the name of its decision. Given a decision $\dt$ and an attribute $\attr{a}$ of $\dt$, we use notation $\name{\dt}{a}$ to denote such a concatenated name.
Formally, we get:
\[
\transl{\sig} =
  \sig
  \cup \set{{P}/2 \mid {P} \in \get{\adrg}{\tin}}
  \cup \bigcup_{\dt \in \get{\adrg}{\dts}}
      \set{\name{\dt}{a}/2 \mid \attr{a} \in \get{\dt}{\tin} \cup \get{\dt}{\tin}}
\]
Each so-generated feature has its first component typed with $\univ$, and its second component typed with the datatype that is assigned by $\adrg$ to its corresponding input data/attribute.

\subsubsection{Encoding of the TBox}
\label{sec:fold-tbox}
The TBox extends the original axioms in $\TBox$ with additional axioms obtained by modularly encoding each decision and information flow of $\adrg$:
\[
\transl{\TBox} =
  \TBox
  \cup \bigcup_{\dt \in \get{\adrg}{\dts}} \big( \tfolb{\cbridge}(\dt) \big)
  \cup \bigcup_{f \in \get{\adrg}{\oimap}} \big( \tfol(f) \big)
\]
where the encoding $\tfolb{\cbridge}(\dt)$ of a decision table $\dt$ (parameterized by the bridge concept of $\adkb$), and the encoding $\tfol(f)$ of an information flow $f$, are detailed next.

\medskip
\noindent
\textbf{Encoding of decisions.} Let us consider the encoding $\tfolb{\cbridge}(\dt)$ of decision $\dt$, parameterized by bridge concept $C$. The encoding consists of the union
of axioms obtained by translating:
\begin{inparaenum}[\itshape (i)]
\item the input/output attributes of $\dt$;
\item the facet conditions or output ranges attached to such attributes;
\item the rules in $\dt$ (considering also priorities and default outputs).
\end{inparaenum}

%

\medskip
\noindent
\textit{Encoding of attributes.~}
For each attribute $\attr{a} \in \get{\dt}{\tin} \cup \get{\dt}{\tout}$, the encoding $\tfolb{\cbridge}$ produces two axioms:
\begin{inparaenum}[\it (i)]
\item a typing formula $\forall x,y. \name{\dt}{a}(x,y) \limp \cbridge(x)$, binding the domain of the attribute to the bridge concept;
\item a functionality formula $\forall x,y,z. \name{\dt}{a}(x,y) \land \name{\dt}{a}(x,z) \limp y = z$, declaring that every object of the bridge concept cannot be connected to more than one value through $\name{\dt}{a}$.
\end{inparaenum}
If $\attr{a}$ is an input attribute, functionality  guarantees that the application of the decision table is unambiguous. If $\attr{a}$ is an output attribute, functionality simply captures that there is a single value present in an output cell of the decision. In general, multiple outputs for the same column may in fact be obtained when applying a decision but, if so, they would be still generated by different rules.

The exact same formalization does not only apply to the input attributes of a decision table, but also to the input data $\get{\adrg}{\tin}$ of the overall \drg $\adrg$.

\begin{example}
  Consider the \dkb in Example~\ref{ex:ship-dkb}. The typing and functionality for the \attr{Enter} input attribute for the refuel area determination decision (shown in Table~\ref{tab:dmn-refuel}, and for which we use compact name $\attr{Rad}$) are:
  \[
    \forall x,y. \name{\attr{Rad}}{Enter}(x,y) \limp \rel{Ship}(x)
    \qquad
    \forall x,y,z. \name{\attr{Rad}}{Enter}(x,y) \land \name{\attr{Rad}}{Enter}(x,z) \limp y = z
  \]
\end{example}

\smallskip
\noindent
\textit{Encoding of facet conditions and output ranges.~}
For each input attribute $\attr{a} \in \get{\dt}{\tin}$, function $\tfolb{\cbridge}$ produces a facet axiom imposing that the range of the feature must satisfy the restrictions imposed by the \sfeel condition $\get{\dt}{\infacet}(\attr{a})$. In formulae:
\[
\forall x,y. \name{\dt}{a}(x,y) \limp \tfol^y(\get{\dt}{\infacet}(\attr{a})),
\]
where, given an \sfeel condition $\cond$ and a variable $x$, function $\tfol^x(\cond)$ builds a unary \fold formula
that encodes the application of $\cond$ to $x$. This is defined as follows:
\[
  \tfol^x(\cond)
  \begin{cases}
    \mathit{true} & \text{if } \cond = ``\anycond"\\
    x \neq \cv{v} & \text{if } \cond = ``\mathtt{not(}\cv{v}\mathtt{)}"\\
    x = \cv{v}  & \text{if } \cond = ``\cv{v}"\\
    x \odot \cv{v} & \text{if } \cond = ``{\odot}\, \cv{v}"
      \text{ and } {\odot} \in \{{<},{>},{\leq},{\geq}\}\\
    x > \cv{v}_1 \land x < \cv{v}_2 & \text{if } \cond = ``(\cv{v}_1..\cv{v}_2)"
    \\
    \ldots & \text{(similarly for the other types of intervals)}\\
    \tfol^x(\cond_1) \lor \tfol^x(\cond_2)
    & \text{if } \cond = ``\cond_1 \orcond \cond_2"\\
  \end{cases}
\]
The same mechanism is applied to the feature generated from each output attribute $\attr{b} \in \get{\dt}{\tout}$, reinterpreting its output range $\outrange(\attr{b}) = \tup{\cv{v}_1,\ldots,\cv{v}_n}$ as the \sfeel facet $``(\cv{v}_1\orcond\ldots\orcond\cv{v}_n)"$. Also in this case, the exact same formalization does not only apply to the attributes of a decision table, but also to the input data $\get{\adrg}{\tin}$ of the overall \drg $\adrg$.

\begin{example}
  Consider again the \dkb in Example~\ref{ex:ship-dkb}, and in particular the \attr{length} attribute in Table~\ref{tab:dmn-clearance} (for which we use the compact name $\attr{Sc}$). The facet \fold axioms for $\attr{length}$ is:
  \[
      \forall x,y. \name{\attr{Sc}}{length}(x,y) \limp  y \geq 0.
  \]
\end{example}

\smallskip
\noindent
\textit{Encoding of rules.~}
Each rule is translated into an axiom expressing that, given an object:
\begin{description}
\item[\textit{\mdseries if}] the object has features for the input attributes of the rule whose values satisfy, attribute-wise, the \sfeel conditions associated by the rule to such input attributes,
\item[\textit{\mdseries then}] that object is also related, via output features, to the values associated by the rule to the output attributes.
\end{description}
Consider now a rule $r = \tup{\incond,\outval}$ in $\dt$. We first encode separately the input entry function $\incond$ and the output entry function $\outval$. similarly to the case of single \sfeel conditions, the encoding of $\incond$ and $\outval$ is parameterized by a variable $x$, representing an object to which the input/output entries are applied. Formally, we thus get:
  \[
    \tfol^x(\incond) = \bigwedge_{\attr{a} \in \get{\dt}{\tin}}
      \exists y. \Big(\name{\dt}{a}(x,y) \land \tfol^y(\incond(\attr{a}))\Big)
    \qquad
    \tfol^x(\outval) = \bigwedge_{\attr{b} \in \get{\dt}{\tout}}
      \exists y. \Big(\name{\dt}{b}(x,y) \land \tfol^y(\outval(\attr{b}))\Big)
  \]
  where $\tfol^y(\incond(\attr{a}))$ applies the encoding for \sfeel conditions defined before, on top of condition $\incond(\attr{a})$ and using variable $y$, obtained from $x$ by navigating the feature corresponding to $\attr{a}$. The selection of $y$ obtained via existential quantification is unambiguous, as features are functional.
  A similar observation holds for $\tfol^y(\outval(\attr{b}))$, noting that it simply produces a formula of the form $y = \cv{v}$, where $\cv{v}$ is the value assigned by rule $r$ to output attribute $\attr{b}$.

We now bind together the encoding of the rule premise and the rule conclusion into the overall encoding of rule $r$, which combines them into an implication formula. The body of this implication formula indicates when the rule trigger, which is partly based on the encoding of $\get{r}{\incond}$, and partly on the priority $\higherp$ (cf.~Section~\ref{sec:dmn}). 
Such a priority is in fact used to determine whether $r$ should really trigger on a given input object, or should instead stay quiescent because there is a higher-priority rule that triggers on the same object. With this notion at hand, we get:
\[
  \tfol(r) =
    \forall x.
      \tfol^x(\get{r}{\incond})
      \land \bigwedge_{r_2 \in \get{\dt}{\trules} \text{ and } r_2 \higherp r} \neg ( \tfol^x(\get{r_2}{\incond}) )
      \limp
      \tfol^x(\get{r}{\outval})
\]
Due to the ``prioritization formula" used in the last part of the body, the overall encoding of all rules in the \drg is at most quadratic in the number of rules.
%
This priority-preserving encoding correctly captures the semantics of rules irrespectively of which single hit indicator is used in $\dt$, possibly introducing some unnecessary conjuncts:
\begin{itemize}
  \item If $\dt$ semantically obeys to the unique hit strategy, then the input conditions of its rules are all mutually exclusive, and hence the prioritization formula is always trivially satisfied.
  \item If $\dt$ semantically obeys to the any hit strategy, then in case of multiple possible matches, all matching rules would actually return the same output values, and so the highest-priority matching rule can be safely selected.
  \item If $\dt$ adopts the priority hit policy, then the prioritization formula is actually needed to guarantee that the overall decision behaves according to what priority dictates.
  \end{itemize}

\begin{example}
Let us consider rule~2 in Table~\ref{tab:dmn-clearance}. Priority is, in this decision, irrelevant, as rules are indeed all non-overlapping. We can therefore ignore the prioritization formula, and simply get:
\[
\begin{array}{@{}r@{}l@{}}
\forall x.
    &\phantom{{}\land{}}
    (\exists e.\name{\attr{Sc}}{cerExp}(x,e) \land e > \cv{today})
    \land
    (\exists l.\name{\attr{Sc}}{length}(x,l) \land l < \cv{260})\\
    &{}\land
    (\exists d.\name{\attr{Sc}}{draft}(x,d) \land d < \cv{10})
    \land
    (\exists c. \name{\attr{Sc}}{capacity}(x,c) \land c < \cv{1000})
\limp
    \exists o. \name{\attr{Sc}}{enter}(x,o) \land o = \cv{y}.
\end{array}
\]
where $\rel{enter}_2$ is obtained from output attribute $\attr{enter}$ in the context of Rule~$2$.
\end{example}

Since rules capture the intended input-output behavior of the decision, we have also to consider the case of default values for output attributes. Since default values are assigned when no rule triggers, we capture the ``default output behavior'' of decision $\dt$ as follows:
\[
    \forall x.
      \bigwedge_{r \in \get{\dt}{\trules}} \neg ( \tfol^x(\get{r}{\incond}) )
      \limp
      \bigwedge_{\attr{b} \in \get{\dt}{\tout} \text{ s.t.~} \get{\dt}{\outdef}(\attr{b}) \text{ is defined}}
        \Big(
          \exists y. \name{\dt}{\attr{b}}(x,y) \land y =  \get{\dt}{\outdef}(\attr{b})
        \Big)
\]
Note that it is not guaranteed that all attributes have a default value. If this is not the case, the formula above only binds those output facets whose corresponding attribute has a default value, leaving the other unspecified. This is perfectly compatible with the setting of \dkb{s}, which indeed work under incomplete information.

\medskip
\noindent
\textbf{Encoding of information flows.}
The encoding of information flows amounts to indicate that the source of an information flow feeds the target of the same information flow. This means that whenever a value is produced by the source, then this value is transferred into the target. Let $\tup{P,\attr{a}}$ be an information flow from input datum $P \in \get{\adrg}{\tin}$ to decision input attribute $\attr{a} \in \get{\dt}{\tin}$ for some decision table $\dt \in \get{\adrg}{\dts}$, and let $\tup{\attr{b},\attr{a}}$ be an information flow from decision output attribute $\attr{b} \in \get{\dt_1}{\tout}$ to decision input attribute $\attr{a} \in \get{\dt_2}{\tin}$ for some $\dt_1,\dt_2 \in \get{\adrg}{\dts}$. Then, we get:
\[
  \tfol(\tup{P,\attr{a}}) =
    \forall x,y.P(x,y) \limp \name{\dt}{\attr{a}}(x,y)
  \qquad
  \tfol(\tup{\attr{b},\attr{a}}) =
    \forall x,y.\name{\dt_1}{\attr{b}}(x,y) \limp \name{\dt_2}{\attr{a}}(x,y)
\]

\begin{example}
Consider the \drg of Figure~\ref{fig:dmn-ex}, observing that the information requirement connecting the input datum $\rel{length}$ and the clearance decision is due to the underlying information flow between such an input datum and the $\attr{length}$ attribute of the ship clearance decision in Table~\ref{tab:dmn-clearance}. Such an information flow is captured by the formula:
\[
\forall x,y. \rel{length(x,y)} \limp \name{\attr{Sc}}{\attr{length}}(x,y)
\]
\end{example}

\subsection{Reasoning Tasks}
\label{sec:tasks}

We now formally revisit and extend the main reasoning tasks introduced by \citeN{CDL16} DMN, considering here \dkbs equipped with complex decisions captured in a \drg. In the following, we generically refer to all such reasoning tasks as \emph{\dkb reasoning tasks}.

By considering a single decision table inside the \drg, we focus on the \emph{compatibility} of the decision with its \emph{policy hit}, considering the semantics of its rules in the context of the overall \dkb.
%
At the level of the whole \drg, we instead focus on the \emph{input-output relationship} induced by the \drg, arising from its internal decisions, information flows, and background knowledge. A related property is that of \emph{output coverage}, which checks whether all mentioned output values of the \drg can possibly be produced.
We also consider the two key properties of completeness and output determinability.
 \emph{Completeness} of a \dkb captures its ability of producing an overall output for the \drg for every configuration of values for its input data.
  Given a so-called \emph{template} describing a set of objects, \emph{output determinability} checks whether the template is informative enough to allow the \dkb determining an overall output for the \drg given an object that instantiates the template. Recall that a \drg has some decision tables marked as outputs of the \drg. In this light, an output of the \drg consists of the combination of an output for each one of its output tables.

%

\smallskip
\noindent
\textbf{Compatibility with ``unique hit".~}
Unique hit is declared in a decision table $\dt$ by setting $\get{\dt}{\thit} = \upol$, and dictates that for every input object, at most one rule of $\dt$ triggers. To check whether this is indeed the case, we introduce the problem of  \emph{compatibility with unique hit} as:
\begin{compactdesc}
\item[\textit{\mdseries Input:}] \idkb $\aidkb$, decision table $\dt \in \get{\get{\aidkb}{\adrg}}{\dts}$.
\item[\textit{\mdseries Question:}]	Is it the case that rules in $\get{\dt}{\trules}$ do not overlap, i.e., never trigger on the same input? Formally:
\[
\tfol(\aidkb) \askmodels
\bigwedge_{r_1,r_2 \in \get{\dt}{\trules} \text{ s.t. } r_1 \neq r_2}
\neg\exists x. \Big( \tfol^x(\get{r_1}{\incond}) \land \tfol^x(\get{r_2}{\incond}) \Big)
\]
\end{compactdesc}

\noindent
\textbf{Compatibility with ``any hit".~}
Any hit is declared in a decision table $\dt$ by setting $\get{\dt}{\thit} = \apol$,  and postulates that whenever multiple rules may simultaneously trigger, they need to agree on the
produced output. In this light, checking whether $\dt$ is compatible with this policy can be
directly reduced to the case of unique hit, but considering only those pairs of
rules in $\dt$ that \emph{differ} in at least one output value.


\noindent
\textbf{Compatibility with ``priority hit".~}
Priority hit is declared in a decision table $\dt$ by setting $\get{\dt}{\thit} = \ppol$,  and postulates that whenever multiple rules may simultaneously trigger, the one with the highest priority is selected. This is directly incorporated in the formalization of rules, so rules are by design compatible with priority hit. However, selecting this  policy may lead to the situation where a rule is \emph{masked by} an higher-priority rule, and hence would never trigger \cite{CDL16}. We thus consider that $\dt$ is compatible with the priority hit policy if none of its rules is masked. In this light, we introduce the problem of  \emph{compatibility with priority hit} as:
\begin{compactdesc}
\item[\textit{\mdseries Input:}] \idkb $\aidkb$, decision table $\dt \in \get{\get{\aidkb}{\adrg}}{\dts}$.
\item[\textit{\mdseries Question:}]	Is it the case that no rule in $\get{\dt}{\trules}$ is masked, i.e., there is at least one input object for which the rule triggers and no higher priority rule does? Formally:
\[
\tfol(\aidkb) \askmodels
\bigwedge_{r_1,r_2 \in \get{\dt}{\trules} \text{ s.t. } r_1 \higherp r_2}
\exists x. \Big( \tfol^x(\get{r_2}{\incond}) \land \neg \tfol^x(\get{r_1}{\incond}) \Big)
\]
\end{compactdesc}

\noindent
\textbf{I/O relationship.~}
A fundamental decision problem is to check whether the decision logic of a \dkb induces a certain input/output relationship for a given object, in the presence of an ABox that captures additional extensional data about the domain of interest (such as values assigned to that object for the input attributes of the \dkb). Specifically, the \emph{I/O relationship problem} for a decision is defined as:
\begin{compactdesc}
\item[\textit{\mdseries Input:}]
\begin{inparaenum}[\itshape (i)]
\item \dkb $\adkb$,
\item object $\cv{o} \in \univ$,
\item decision table $\dt \in \get{\get{\adkb}{\adrg}}{\odts}$,
\item output attribute $\attr{b} \in \get{\dt}{\tout}$,
\item value $\cv{v} \in \get{\dt}{\atype}(\attr{b})$.
\end{inparaenum}
\item[\textit{\mdseries Question:}] Is it the case that $\adkb$ relates object $\cv{c}$ to value $\cv{v}$ via feature $\name{\dt}{\attr{b}}$ ? Formally:
    \[
      \tfol(\adkb) \askmodels \name{\dt}{\attr{b}}(\cv{o},\cv{v})
    \]
\end{compactdesc}

\noindent
\textbf{Output coverage.~}
Output coverage refers to the I/O relationship induced by an \idkb, in this case focusing on the possibility of actually deriving a specific value for one of the output attributes of the \drg contained in the \idkb. If this is not possible, then it means that, due to the interplay between different decision tables and their information flows, as well as the contribution of the background knowledge, some output configurations are never obtained. Specifically, we defined the \emph{output coverage} problem as:
\begin{compactdesc}
\item[\textit{\mdseries Input:}]
\begin{inparaenum}[\itshape (i)]
\item \idkb $\aidkb$,
\item decision table $\dt \in \get{\get{\adkb}{\adrg}}{\odts}$,
\item output attribute $\attr{b} \in \get{\dt}{\tout}$,
\item value $\cv{v} \in \get{\dt}{\atype}(\attr{b})$.
\end{inparaenum}
\item[\textit{\mdseries Question:}] Does $\aidkb$ cover the possibility of outputting $\cv{v}$ for output attribute $\attr{b}$ of decision table $\dt$? Formally:
    \[
      \tfol(\aidkb) \askmodels \exists x,y. \name{\dt}{b}(x,y) \land y = \cv{v}
    \]
\end{compactdesc}

\begin{example}
Consider the \idkb $\aidkb_{ship}$ of our running example, in particular as defined in Example~\ref{ex:ship-dkb}. By focusing on the $\attr{RefuelArea}$ attribute of the output decision table \emph{refuel area determination} (cf.~\ref{tab:dmn-refuel}), we can see that value $\outdoor$ is not covered by $\aidkb_{ship}$. In fact, to produce such an output, rule $4$ should trigger, which in turn requires $\attr{length}$ and $\attr{cargo}$ to respectively be $>350$ and $>0.3$, and as well as $\attr{enter}$ to be $\cv{y}$. While the first two attributes are set by input data, the last is produced by the \emph{ship clearance} table, which is defined on the same input data, plus further ones (cf.~\ref{tab:dmn-clearance}). However, the only rule of \emph{ship clearance} that matches with the aforementioned conditions for $\attr{length}$ and $\attr{cargo}$, is in fact rule $9$, which however computes $\cv{n}$ for $\attr{enter}$, in turn falsifying the first condition of rule $4$ in \emph{refuel area determination}. This formally confirms the informal discussion of Section~\ref{sec:challenges}. Notice that this issue does not depend on the background knowledge, but on the (partial) incompatibility between the two decision tables.
\end{example}

\noindent
\textbf{Completeness.~}
Completeness asserts that the application of an \idkb to an arbitrary input object assigning values for the inputs of the \drg contained in the \idkb, is guaranteed to properly derive corresponding outputs. The \emph{\drg completeness problem} is then defined as follows:
\begin{compactdesc}
\item[\textit{\mdseries Input:}] \idkb $\aidkb$.
\item[\textit{\mdseries Question:}]	Is it the case that, for every object that assigns a value to each input of $\get{\get{\aidkb}{\adrg}}{\tin}$,  $\aidkb$ derives an output for each one of the output decision tables $\get{\get{\aidkb}{\adrg}}{\odts}$? Formally:
\[
\tfol(\aidkb) \askmodels
\forall x.
\Big(
\bigwedge_{P \in \get{\get{\aidkb}{\adrg}}{\tin}}
\exists y.P(x,y)
\Big)
\limp
\bigwedge_{\dt \in \get{\get{\aidkb}{\adrg}}{\odts}}
\bigwedge_{\attr{b} \in \get{\dt}{\tout}}
\exists y.\name{\dt}{b}(x,y)
\]
\end{compactdesc}

\noindent
\textbf{Output determinability.~}
Output determinability is a refinement of completeness. It amounts at checking whether, given a template describing a set of objects (encoded as a unary \fold formula), that template description is detailed enough to ensure that the \idkb
    properly derives the outputs of its \drg for every object that belongs to the template. This is in fact the only decision problem that only makes sense in the presence of background knowledge.
   Specifically, the \emph{output determinability problem} is defined as follows:
\begin{compactdesc}
\item[\textit{\mdseries Input:}] \idkb $\aidkb$, unary \fold formula $\varphi(x)$ over signature $\get{\aidkb}{\sig}$ (called \emph{template}).
\item[\textit{\mdseries Question:}]	Is it the case that, for every object that satisfies template $\varphi(x)$,  $\aidkb$ derives an output for each one of the output decision tables $\get{\get{\aidkb}{\adrg}}{\odts}$? Formally:
\[
\tfol(\aidkb) \askmodels
\forall x.
\varphi(x) \limp
\bigwedge_{\dt \in \get{\get{\aidkb}{\adrg}}{\odts}}
\bigwedge_{\attr{b} \in \get{\dt}{\tout}}
\exists y.\name{\dt}{b}(x,y)
\]
\end{compactdesc}
It is easy to see that completeness is a special case of output determinability, where the template simply describes objects that have all input data attached to them: $\varphi(x) = \bigwedge_{P \in \get{\get{\aidkb}{\adrg}}{\tin}}
\exists y.P(x,y)$.

\begin{example}
Consider again the \idkb $\aidkb_{ship}$ of Example~\ref{ex:ship-dkb}. We have already discussed in Section~\ref{sec:challenges} that to properly apply the decision logic formalized in $\aidkb_{ship}$, it is sufficient to know its type, cargo residuals, and certificate expiration date. This can be formalized as an output determinability problem, using as template the unary formula:
\[
  \varphi_{ship}(x) = \exists e,c,t. \rel{cerExp}(x,e) \land \rel{capacity}(x,c) \land \rel{stype}(x,t)
\]
\end{example}

\noindent
\textbf{DMN reasoning tasks.~}
We stress that, with the exception of output determinacy, all the decision problems identified here are relevant also when background knowledge is not present, and consequently a given \drg is interpreted under the assumption of complete information. In this case, compatibility with the different hit indicators, output coverage, and completeness, can all be captured as explained above, by simply setting $\TBox = \emptyset$. To account for I/O relationship, we have to put $\TBox = \emptyset$, and fix $\ABox$ to contain exactly the following facts:
\begin{inparaenum}[\itshape (i)]
\item a fact $\cbridge(\cv{o})$ for the selected object $\cv{o}$;
\item a set of facts of the form
  $\set{P(\cv{o},\cv{v}_j) \mid P \in \get{\get{\adkb}{\adrg}}{\tin}}$, denoting the
  assignment of input attributes for $\cv{o}$ to the corresponding values of interest, one per input data of the \drg.
\end{inparaenum}
In addition, all the identified decision problems can also be studied in the case of a single decision table $\dt$,  not immersed inside a \drg. This requires to construct the trivial \drg $\adrg_{\dt}$ that contains $\dt$ as the only decision table, marks it also as output table, and contains input data that exactly match (and feed via information flows) the input attributes of $\dt$. Properties of $\dt$ in the presence of background knowledge can then be assessed by studying a \dkb or \idkb that uses $\adrg_{\dt}$ as \drg.


\section{Reasoning on Decision Knowledge Bases}
\label{sec:translation}

While the translation from \dkb{s} to \fold presented in
Section~\ref{sec:formalization} provides a logic-based semantics for DKBs, it
does not give any insight on how to actually approach the different decision
problems of Section~\ref{sec:tasks}. In fact, none of such problems can be
solved in the general case of full \fold.  Specifically, decidability and
complexity of such reasoning tasks depend on the background knowledge and on
the decision component. Since the decision component comes with the fixed
\sfeel language and \drg structure, we approach this problem as follows.
First, we show that DMN decision tables written in \sfeel, and interconnected
in a \drg, can be encoded in \ALCHdd. Then, we show that all reasoning tasks
defined in Section~\ref{sec:tasks} can be reduced to (un)satisfiability of an
\ALCHdd concept w.r.t.~a KB consisting of the union of the background knowledge
with the \ALCHdd formalization of the \drg. This implies that all such reasoning
tasks can be carried out in \exptime, if the background knowledge is expressed
in \ALCHdd.


%
%
%
%

\subsection{Encoding \drgs in \ALCHdd}
\label{sec:translartion-alc}
%
We revisit the translation from \dkb{s} to \fold introduced in \ref{sec:formalization}, showing that the translation of \drgs can be reconstructed so as to obtain an \ALCHdd IKB.

 Given a bridge concept $\cbridge$ and a \drg $\adrg$, we introduce a translation function $\tdl_\cbridge$ that encodes $\dt$ into the corresponding \ALCHdd IKB $\tdl_\cbridge(\adrg) = \tup{\sig_\adrg,\TBox_\adrg}$, using $\cbridge$ to provide a context for the encoding. The signature is obtained as in Section~\ref{sec:fold-signature}. The encoding of $\TBox_\adrg$ reconstructs that of Section~\ref{sec:formalization}, and in fact deals with input data and information flows of the $\adrg$, as well as input/output attributes, facets, and rules of decision tables $\get{\adrg}{\dts}$.

\smallskip
\noindent
\textbf{Encoding of attributes and input data.~}
For each decision table $\dt \in \get{\adrg}{\dts}$, and each attribute $\attr{a} \in \get{\dt}{\tin} \cup \get{\dt}{\tout}$,  encoding $\tdl_\cbridge$ produces the typing axiom $ \SOMET{\name{\dt}{a}} \ISA \cbridge$. The same holds for all input data $\get{\adrg}{\tin}$.  Functionality is not explicitly asserted, since \ALCHdd features are functional by default.

\smallskip
\noindent
\textbf{Encoding of facet conditions.~}
For each decision table $\dt \in \get{\adrg}{\dts}$, and each input attribute $\attr{a} \in \get{\dt}{\tin}$,  encoding $\tdlb{\cbridge}$ produces a derived datatype declaration of the form
\[
  \SOMET{\name{\dt}{a}} \ISA \tdl^{\name{\dt}{a},\atype(\attr{a})}(\get{\dt}{\infacet}(\attr{a}))
\]
where, given an \sfeel condition $\cond$, a facet $P$, and a datatype $type$, function $\tdl^{P,type}$ produces an \ALCHdd concept capturing objects that have an outgoing facet of type $P$, whose range satisfies $\cond$. This is defined as follows:
\[
\tdl^{P,type}(\cond) =
\begin{cases}
  \top
    & \text{if } \cond = ``\anycond"\\
  \neg \SOMET{P.type}[=_\cv{v}]
    & \text{if } \cond = ``\mathtt{not(}\cv{v}\mathtt{)}"\\
  \SOMET{P.type}[=_\cv{v}]
    & \text{if } \cond = ``\cv{v}"\\
  \SOMET{P.type}[\odot_\cv{v}]
    & \text{if } \cond = ``\odot\cv{v}"  \text{ and } \odot \in \set{<,>,\leq,\geq}\\
  \SOMET{P.type}[>_{\cv{v}_1} \land <_{\cv{v}_2}]
    & \text{if } \cond = ``(\cv{v}_1..\cv{v}_2)"\\
\ldots &\text{(similarly for the other types of intervals)}\\
  \tdl^{P,type}(\cond_1) \OR \tdl^{P,type}(\cond_2)
    & \text{if } \cond = ``\cond_1 \orcond \cond_2"\\
\end{cases}
\]
The same encoding is applied by $\tdlb{\cbridge}$ to each input data $P \in \get{\adrg}{\tin}$ with its facet $\get{\adrg}{\infacet}(P)$, and, for every decision table $\dt \in \get{\adrg}{\dts}$, to each output attribute $\attr{b} \in \get{\dt}{\tout}$ with its range $\get{\dt}{\outrange}(\attr{b})$.

\begin{example}
  Consider the $\attr{length}$ attribute of the \emph{ship clearance} decision
  table (cf.~\tablename~\ref{tab:dmn-clearance}). With $\rel{Ship}$ as bridge
  concept, the typing and facet $\ALCHdd$ formulae for $\attr{length}$ are:
  \[
    \SOMET{\rel{length}}  \ISA \rel{Ship}
    \qquad
    \SOMET{\rel{length}}  \ISA  \SOMET{\rel{length.real}}[>_0]
  \]
\end{example}

\noindent
\textbf{Encoding of rules.} Consider a decision table $\dt$, and one of its rules $r = \tup{\incond,\outval}$. The encoding of $\incond$ (resp., $\outval$) constructs an \ALCHdd concept that has all features mentioned by the input (resp., output) attributes, restricted so as to satisfy the corresponding input condition (resp., output value) imposed by $\incond$ (resp., $\outval$):
\[
  \tdlb{\cbridge}(\incond) =
    \bigsqcap_{\attr{a} \in \get{\dt}{\tin}}\tdl^{\name{\dt}{a},\get{\dt}{\atype}(\attr{a})}
    (\incond(\attr{a}))
  \qquad
   \tdlb{\cbridge}(\outval) =
    \bigsqcap_{\attr{b} \in \get{\dt}{\tout}}\tdl^{\name{\dt}{b},\get{\dt}{\atype}(\attr{b})}
    (\incond(\attr{b}))
\]
We combine these two encodings into a global encoding of rule $r$, imposing that the rule indeed triggers only if no higher-priority rule triggers:
\[
\tdlb{\cbridge}(r) =
  \tdlb{\cbridge}(\get{r}{\incond})
  \sqcap
  \bigsqcap_{r_2 \in \get{\dt}{\trules} \text{ and } r_2 \higherp r}
    \neg \tdlb{\cbridge}(\get{r_2}{\incond})
  \ISA
   \tdlb{\cbridge}(\get{r}{\outval})
\]

\begin{example}
  Consider the \emph{ship clearance} decision table, referred by name
  $\attr{Sc}$. In particular, consider rule~2 of this decision table, as shown
  \tablename~\ref{tab:dmn-clearance}. By assuming that this is the top-priority
  rule, it is encoded in $\ALCHdd$ as:
\[
  \begin{array}{@{}l}
    \SOMET{\name{\attr{Sc}}{cerExp}.\rel{real}}[>_{\cv{today}}] \AND
    \SOMET{\name{\attr{Sc}}{length}.\rel{real}}[<_{\cv{260}}] \\
    {}\AND \SOMET{\name{\attr{Sc}}{draft}.\rel{real}}[<_{\cv{10}}] \AND
    \SOMET{\name{\attr{Sc}}{cap}.\rel{real}}[<_{\cv{1000}}]
    ~\ISA~
    \SOMET{\name{\attr{Sc}}{enter}.\rel{string}}[=_{\cv{Y}}]
  \end{array}
\]
\end{example}
We also have to handle the generation of default values, when no rule in $\dt$ triggers. This is captured by the following, additional axiom:
\[
    \bigsqcap_{r \in \get{\dt}{\trules}} \neg \tdlb{\cbridge}(\get{r}{\incond})
      ~\ISA~
      \bigsqcap_{\attr{b} \in \get{\dt}{\tout} \text{ s.t.~} \get{\dt}{\outdef}(\attr{b}) \text{ is defined}}
        \Big(
          \SOMET{\name{\dt}{\attr{b}}}[=_{\get{\dt}{\outdef}(\attr{b})}]
        \Big)
\]

\smallskip
\noindent
\textbf{Encoding of information flows.}
Let $\tup{P,\attr{a}}$ be an information flow from input datum $P \in \get{\adrg}{\tin}$ to decision input attribute $\attr{a} \in \get{\dt}{\tin}$ for some decision table $\dt \in \get{\adrg}{\dts}$, and let $\tup{\attr{b},\attr{a}}$ be an information flow from decision output attribute $\attr{b} \in \get{\dt_1}{\tout}$ to decision input attribute $\attr{a} \in \get{\dt_2}{\tin}$ for some $\dt_1,\dt_2 \in \get{\adrg}{\dts}$.
 Their \ALCHdd encoding consists of the following facet inclusion assertions:
\[
   \tdlb{\cbridge}(\tup{P,\attr{a}})  =
    P ~\ISA \name{\dt}{\attr{a}}
  \qquad
  \tdlb{\cbridge}(\tup{\attr{b},\attr{a}})  =
    \name{\dt_1}{\attr{b}} ~\ISA \name{\dt_2}{\attr{a}}
\]

\smallskip
\noindent
\textbf{Correctness of the encoding.}
Thanks to the fact that \ALCHdd can be seen as a well-behaved fragment of \fold, we can
directly establish that the \ALCHdd encoding of \drgs properly reconstructs the original \fold encoding.

\begin{theorem}
\label{thm:equivalence}
For every \drg $\adrg$, and every (bridge) concept $\cbridge$, we have that the \fold IKB $\tfolb{\cbridge}(\adrg)$ is \emph{logically equivalent} to the \ALCHdd IKB $\tdlb{\cbridge}(\adrg)$.
\end{theorem}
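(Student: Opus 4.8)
The plan is to lean on the fact that $\ALCHdd$ is a syntactic fragment of $\fold$, so the claimed equivalence can be obtained from the \emph{standard translation} $\pi$. Recall that $\pi$ sends each $\ALCHdd$ concept $C$ to a $\fold$ formula $\pi_x(C)$ with one free variable (with $\pi_x(\top)=\mathit{true}$, $\pi_x(N)=N(x)$, $\pi$ commuting with the Boolean connectives, $\pi_x(\SOME{R}{C})=\exists y.\,R(x,y)\land\pi_y(C)$, $\pi_x(\SOME{F}{\dtype})=\exists y.\,F(x,y)\land y\in\dtype$ read in accordance with the datatype semantics of Figure~\ref{fig:dl-semantics-concepts}, and $\pi_x(\UNDEF{F})=\neg\exists y.\,F(x,y)$), sends a concept inclusion $C_1\ISA C_2$ to $\forall x.\,\pi_x(C_1)\limp\pi_x(C_2)$, and sends role/feature (dis)inclusions to the evident universally quantified implications over pairs. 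It is standard that an $\ALCHdd$ IKB and the $\fold$ theory obtained from its $\pi$-image by appending the functionality axioms $\forall x,y,z.\,F(x,y)\land F(x,z)\limp y=z$ for every feature $F$ have exactly the same models. Hence it suffices to show that $\pi(\tdlb{\cbridge}(\adrg))$, extended with these functionality axioms, is logically equivalent to $\tfolb{\cbridge}(\adrg)$; the two share the same signature, since both obtain it as in Section~\ref{sec:fold-signature}.

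Next I would observe that the two encodings are assembled modularly over the \emph{same} building blocks: both are unions of sub-theories indexed by (i) the input data and the input/output attributes of every decision table of $\adrg$, (ii) their attached facets, (iii) the rules of every decision table (with the prioritization conjuncts and the default-output axiom), and (iv) the information flows in $\get{\adrg}{\oimap}$. So I would prove the equivalence \emph{component by component}, matching each axiom produced by $\tdlb{\cbridge}$, after applying $\pi$, with the corresponding axiom produced by $\tfolb{\cbridge}$. Most components are immediate: $\pi(\SOMET{\name{\dt}{a}}\ISA\cbridge)$ is $\forall x.\,(\exists y.\,\name{\dt}{a}(x,y))\limp\cbridge(x)$, which is logically equivalent to the $\fold$ typing formula $\forall x,y.\,\name{\dt}{a}(x,y)\limp\cbridge(x)$; functionality is implicit in $\ALCHdd$ and explicit on the $\fold$ side, so the two coincide; $\pi$ applied to an information-flow inclusion $P\ISA\name{\dt}{a}$ yields exactly $\tfol(\tup{P,\attr{a}})$, and likewise for flows between tables and for the default-output axiom, because $\pi$ distributes through $\AND$, through negation, and through $\ISA$.

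The core of the argument is a structural induction on the shape of an \sfeel condition $\cond$, establishing that $\pi_x\big(\tdl^{P,\mathit{type}}(\cond)\big)$ behaves, in the presence of functionality of $P$ (and the typing/facet axioms already present, which pin $P$-values to $\mathit{type}$), like the ``feature-applied'' $\fold$ formula: in the guarded facet form $\SOMET{P}\ISA\tdl^{P,\mathit{type}}(\cond)$ it matches the $\fold$ facet axiom $\forall x,y.\,P(x,y)\limp\tfol^y(\cond)$, and as a subformula it matches the conjunct $\exists y.\,P(x,y)\land\tfol^y(\cond)$ that $\tfol^x$ of an input-entry function contributes for the attribute governed by $P$. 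The base cases (a constant match $\cv{v}$, the negative condition $\mathtt{not(}\cv{v}\mathtt{)}$, the comparison facets $\odot\cv{v}$, the four kinds of numeric intervals, and the any-value condition $\anycond$) follow from the datatype semantics of Figure~\ref{fig:dl-semantics-concepts} together with functionality of $P$, which collapses the existential over $P$-successors into an assertion about the unique $P$-value; the disjunctive case is immediate, since $\tdl^{P,\mathit{type}}$ maps $\orcond$ to $\OR$ and $\pi$ maps $\OR$ to $\lor$, mirroring the $\lor$ in $\tfol^y$. Feeding this into the rule encodings: $\tdlb{\cbridge}(\get{r}{\incond})$ is the $\AND$ over input attributes of these concepts, so $\pi$ turns it into the matching conjunction; each negated higher-priority conjunct $\NOT\tdlb{\cbridge}(\get{r_2}{\incond})$ translates to the corresponding negated conjunct of the prioritization formula; and since $\pi$ turns the outer $\ISA$ into $\forall x.\,(\cdots)\limp(\cdots)$ with $\tdlb{\cbridge}(\get{r}{\outval})$ on the right translating to the $\fold$ conclusion of $\tfol(r)$, the $\pi$-image of the $\ALCHdd$ rule axiom coincides with $\tfol(r)$.

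The step I expect to be the main obstacle is precisely this facet-condition lemma, and within it the bookkeeping around the interaction between $\ALCHdd$ feature restrictions and feature functionality: $\SOME{F}{\dtype}$ ``asks for a witness value'' whereas the $\fold$ encoding navigates the (unique, by functionality) $F$-value and tests it, and reconciling the two uniformly across base cases is delicate — in particular for $\mathtt{not(}\cv{v}\mathtt{)}$, where $\tdl$ produces $\NOT\SOMET{P.\mathit{type}}[=_{\cv{v}}]$ against $y\neq\cv{v}$ in $\tfol^y$, and for the any-value condition, where one must be careful about whether the $P$-successor is required to exist. Here one invokes the functionality of $P$ and the typing/facet axioms already in the IKB. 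Once the lemma is established, assembling the componentwise equivalences and appending the feature-functionality axioms shows that $\tfolb{\cbridge}(\adrg)$ and $\tdlb{\cbridge}(\adrg)$ have the same models over their common signature, i.e., that they are logically equivalent, which is the claim.
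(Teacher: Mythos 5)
Your overall strategy is exactly the paper's: the published proof is a one-line appeal to the fact that the standard \fold translation of $\tdlb{\cbridge}(\adrg)$ ``becomes identical'' to $\tfolb{\cbridge}(\adrg)$, and your component-by-component matching via $\pi$ (typing, facets, rules, defaults, information flows) is a faithful expansion of that observation. Most of the matching is indeed immediate, as you say, and your treatment of the guarded facet axioms and of functionality is correct.

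However, the step you yourself single out as the main obstacle is a genuine gap, and the fix you propose does not close it. Take an input attribute $\attr{a}$ of a rule $r$ whose cell is the any-value condition $\anycond$. On the \fold side, $\tfol^x(\get{r}{\incond})$ contributes the conjunct $\exists y.(\name{\dt}{a}(x,y)\land\mathit{true})$, which forces the feature to be \emph{defined} for $x$; on the DL side, $\tdl^{\name{\dt}{a},\mathit{type}}(\anycond)=\top$, whose $\pi$-image is $\mathit{true}$ and forces nothing. Functionality and the typing/facet axioms constrain the $P$-value \emph{when it exists}; none of them forces existence, so they cannot reconcile $\exists y.\,\name{\dt}{a}(x,y)$ with $\mathit{true}$. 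The same mismatch arises for $\mathtt{not(}\cv{v}\mathtt{)}$: $\pi_x\bigl(\neg\SOMET{P.\mathit{type}}[=_{\cv{v}}]\bigr)$ is $\neg\exists y.(P(x,y)\land y=\cv{v})$, which is satisfied by an object having \emph{no} $P$-value, whereas the \fold conjunct $\exists y.(P(x,y)\land y\neq\cv{v})$ is not; the two agree only on objects for which $P$ is defined. Since the framework explicitly admits objects that lack some input features, these are not vacuous corner cases: the two rule antecedents, and hence the prioritization conjuncts and the default-output axioms built from their negations, can disagree on such objects. To be fair, this is a wrinkle in the paper's own pair of definitions, which its ``direct by definition'' proof glosses over just as silently; a clean repair is either to let every base case of $\tdl^{P,\mathit{type}}$ carry the existential (e.g., $\SOME{P}{\mathit{type}}$ for $\anycond$ and $\SOME{P}{(\mathit{type}\setminus\{\cv{v}\})}$ for $\mathtt{not(}\cv{v}\mathtt{)}$), or to state the equivalence relative to objects on which all input features are defined. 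As written, though, your key lemma --- that $\pi_x\bigl(\tdl^{P,\mathit{type}}(\cond)\bigr)$ is equivalent to $\exists y.(P(x,y)\land\tfol^y(\cond))$ modulo functionality and typing --- fails in precisely the two base cases you flagged, so the proof does not go through without amending it.
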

\begin{proof}Direct by definition of the encodings $\tfolb{\cbridge}$ and $\tdlb{\cbridge}$, noting that, once \ALCHdd IKB $\tdlb{\cbridge}(\dt)$ is
represented in \fold using the standard \fold encoding of \ALCHdd, it becomes identical to the \fold IKB $\tfolb{\cbridge}(\dt)$.
\end{proof}

\subsection{Reasoning over \ALCHdd Decision Knowledge Bases}

By exploiting the \ALCHdd encoding of a \drg, we have now the possibility of
studying if and how the different reasoning tasks introduced in
Section~\ref{sec:tasks} can be effectively carried out in the case where the
background knowledge is also represented as an \ALCHdd (I)KB.

We say that an \idkb $\aidkb$ is an \emph{\ALCHdd \idkb} if its TBox
$\get{\adkb}{\TBox}$ is an \ALCHdd TBox. As for a $\dkb$ $\adkb$, we require the
same, and also that and its ABox $\get{\adkb}{\ABox}$ is an \ALCHdd ABox.  Given
an \ALCHdd \dkb $\adkb$, we extend the encoding function
$\tdlb{\get{\adkb}{\cbridge}}$ introduced in Section~\ref{sec:translartion-alc}
so as to make it applicable over the entire \dkb, as follows:
$\tdlb{C}(\adkb) = \tup{\get{\adkb}{\sig} \cup \sig', \get{\adkb}{\TBox} \cup
 \TBox',\get{\adkb}{\ABox}}$, where
$\tup{\sig',\TBox'} = \tdlb{\cbridge}(\get{\adkb}{\adrg})$ (similarly for an
\ALCHdd IKB). With these notions at hand, we show the following.

\begin{theorem}
  \label{thm:reasoning}
  In the case of \ALCHdd \dkb{s}, all \dkb reasoning tasks can be reduced to
  standard \ALCHdd reasoning tasks.
\end{theorem}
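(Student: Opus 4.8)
The plan is to build directly on the two preceding theorems. By Theorem~\ref{thm:equivalence}, for every bridge concept $\cbridge$ the \fold IKB $\tfolb{\cbridge}(\adrg)$ is logically equivalent to the \ALCHdd IKB $\tdlb{\cbridge}(\adrg)$; hence every judgment of Section~\ref{sec:tasks}---each stated over $\tfol(\adkb)$ (or $\tfol(\aidkb)$)---can equivalently be stated over $\tdlb{\cbridge}(\adkb)$ (resp.\ $\tdlb{\cbridge}(\aidkb)$), and when the background component is \ALCHdd this object is a genuine \ALCHdd KB, with TBox $\get{\aidkb}{\TBox}\cup\TBox_{\adrg}$ and ABox $\get{\adkb}{\ABox}$. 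It then remains to rewrite each of the seven query sentences into one, or polynomially many, instances of a standard \ALCHdd service---concept satisfiability w.r.t.\ a TBox, KB satisfiability, or instance checking---each decidable in \exptime by Theorem~\ref{thm:alcdd}. The observation that makes this routine is that every subformula occurring in the queries is already a first-order rendering of an \ALCHdd concept: a rule premise $\tfol^x(\get{r}{\incond})$ is the standard FO encoding (with free variable $x$) of the \ALCHdd concept $\tdlb{\cbridge}(\get{r}{\incond})$ defined in Section~\ref{sec:translartion-alc}; a conjunct $\exists y.\big(\name{\dt}{a}(x,y)\land\psi(y)\big)$ is a feature restriction; and a conjunct $\exists y.P(x,y)$ is $\neg\UNDEF{P}$.

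I would then dispatch the tasks whose answer must hold in \emph{every} model, using the elementary facts that, for a TBox $T$ and \ALCHdd concepts $C,D$, one has $T\models\forall x.\neg C(x)$ iff $C$ is unsatisfiable w.r.t.\ $T$, and $T\models\forall x.(C(x)\limp D(x))$ iff $C\AND\neg D$ is unsatisfiable w.r.t.\ $T$. Compatibility with the unique-hit (resp.\ any-hit) policy of a table $\dt$ is the conjunction, over the relevant pairs $r_1\neq r_2$ of rules, of sentences $\neg\exists x.\big(\tfol^x(\get{r_1}{\incond})\land\tfol^x(\get{r_2}{\incond})\big)$, hence holds iff, for each such pair, the concept $\tdlb{\cbridge}(\get{r_1}{\incond})\AND\tdlb{\cbridge}(\get{r_2}{\incond})$ is unsatisfiable w.r.t.\ $\get{\aidkb}{\TBox}\cup\TBox_{\adrg}$. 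Completeness and output determinability both have the shape $\forall x.(\varphi(x)\limp\psi(x))$ with $\psi$ the \ALCHdd concept $\bigsqcap_{\dt\in\get{\get{\aidkb}{\adrg}}{\odts},\,\attr{b}\in\get{\dt}{\tout}}\neg\UNDEF{\name{\dt}{b}}$ and, for completeness, $\varphi$ the \ALCHdd concept $\bigsqcap_{P\in\get{\get{\aidkb}{\adrg}}{\tin}}\neg\UNDEF{P}$; thus they hold iff $\varphi\AND\neg\psi$ is unsatisfiable w.r.t.\ the TBox of $\tdlb{\cbridge}(\aidkb)$. For output determinability this argument goes through exactly when the template $\varphi$ is equivalent to an \ALCHdd concept, which covers the natural templates such as $\varphi_{ship}$ of the running example; a genuinely richer \fold template lies outside \ALCHdd and is out of scope for this statement.

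The remaining tasks concern a single object or the \emph{possible} behaviour of the decision logic. The I/O-relationship judgment $\tfol(\adkb)\askmodels\name{\dt}{b}(\cv{o},\cv{v})$ is feature instance checking for $\tdlb{\cbridge}(\adkb)$: since \ALCHdd features are functional, the pair $(\cv{o},\cv{v})$ lies in the extension of $\name{\dt}{b}$ in every model iff $\cv{o}$ is an instance of $\SOME{\name{\dt}{b}}{\{\cv{v}\}}$ in every model, which---exactly as in the proof of Theorem~\ref{thm:alcdd}---reduces to unsatisfiability of the \ALCHdd KB obtained from $\tdlb{\cbridge}(\adkb)$ by adding a fresh concept name $N_n$, the inclusion $N_n\ISA\neg\SOME{\name{\dt}{b}}{\{\cv{v}\}}$, and the fact $N_n(\cv{o})$. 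Output coverage asks whether producing $\cv{v}$ for $\attr{b}$ is possible, i.e.\ whether some model of $\tdlb{\cbridge}(\aidkb)$ contains an object with $\name{\dt}{b}$-value $\cv{v}$; this is precisely satisfiability of the concept $\SOME{\name{\dt}{b}}{\{\cv{v}\}}$ w.r.t.\ the TBox. Symmetrically, compatibility with the priority-hit policy amounts to requiring, for every pair $r_1\higherp r_2$, that rule $r_2$ is not masked, i.e.\ that $\tdlb{\cbridge}(\get{r_2}{\incond})\AND\neg\tdlb{\cbridge}(\get{r_1}{\incond})$ is satisfiable w.r.t.\ the TBox; the finitely many per-pair witnesses can be gathered into one model by taking the disjoint union of the per-pair witnessing models, since disjoint unions preserve \ALCHdd TBoxes.

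I do not expect a genuine mathematical obstacle here---Theorems~\ref{thm:equivalence} and~\ref{thm:alcdd} carry the real weight---but rather careful bookkeeping: for each of the seven tasks one must check that, after replacing $\tfolb{\cbridge}$ by $\tdlb{\cbridge}$, every sentence appearing is built solely from \ALCHdd concepts over the extended signature, and that its quantifier pattern (a universally quantified negation, a universal implication, a ground atom, or an existential possibility asserting the existence of a model witnessing a fact) is matched with the correct standard service via the model-theoretic equivalences recalled above. Composing these reductions with the \exptime bounds of Theorem~\ref{thm:alcdd} then also yields the \exptime decidability of all \dkb reasoning tasks over \ALCHdd \dkbs; the single point that requires an explicit hypothesis is the output-determinability template, which must be \ALCHdd-expressible for the reduction to remain within \ALCHdd.
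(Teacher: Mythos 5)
Your proposal is correct and follows essentially the same route as the paper: translate the \drg via $\tdlb{\cbridge}$ into an \ALCHdd IKB (justified by Theorem~\ref{thm:equivalence}), and then dispatch each of the seven tasks to polynomially many concept (un)satisfiability or instance-checking tests against the resulting \ALCHdd KB, exactly matching the paper's per-task algorithms (including the restriction of output-determinability templates to \ALCHdd concepts, which the paper also imposes). The minor extras you add --- reducing feature instance checking to KB unsatisfiability via a fresh concept name, and the disjoint-union remark for the priority-hit witnesses --- are harmless refinements of the same argument.
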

\begin{proof}
We show, for each \dkb reasoning task, how it can be reduced to a polynomial
number of \ALCHdd concept (un)satisfiability or instance checking tests
w.r.t.\ an \ALCHdd KB.

\smallskip
\noindent\textit{Compatibility with ``unique hit".~} We use the following
algorithm, relying on \ALCHdd satisfiability checking. In the following
algorithm, the usage of $\higherp$ is not needed for correctness, but actually
matters to reduce the number of checks (being the notion of overlap symmetric).

\begin{lstlisting}[frame=none,mathescape=true]
boolean compatibleWithU(IDKB $\aidkb$, Table $\dt \in \get{\aidkb}{\dts}$) {
  for each $r_1,r_2 \in \get{\dt}{\trules}$ such that $r_1 \higherp r_2$ {
    if $\tdlb{\get{\aidkb}{\cbridge}}(\get{r_1}{\incond}) \sqcap \tdlb{\get{\aidkb}{\cbridge}}(\get{r_2}{\incond})$ is satisfiable w.r.t. $\tdlb{\get{\aidkb}{\cbridge}}(\aidkb)$
      return false;
  }
  return true;
}
\end{lstlisting}

\noindent\textit{Compatibility with ``any hit".~} We use exactly the same
algorithm used for compatibility with ``unique hit", with the only difference
that in line 2 we add $\get{r_1}{\outval} = \get{r_2}{\outval}$ as a further
condition, to ensure that the output values produced by $r_1$ and $r_2$
coincide.

\smallskip
\noindent\textit{Compatibility with ``priority hit".~} We use the following
algorithm, relying on \ALCHdd unsatisfiability checking.

\begin{lstlisting}[frame=none,mathescape=true]
boolean compatibleWithP(IDKB $\aidkb$, Table $\dt \in \get{\aidkb}{\dts}$) {
  for each $r_1,r_2 \in \get{\dt}{\trules}$ such that $r_1 \higherp r_2$ {
    if $\neg \tdlb{\get{\aidkb}{\cbridge}}(\get{r_1}{\incond}) \sqcap \tdlb{\get{\aidkb}{\cbridge}}(\get{r_2}{\incond})$ is unsatisfiable w.r.t. $\tdlb{\get{\aidkb}{\cbridge}}(\aidkb)$
      return false;
  }
  return true;
}
\end{lstlisting}

\noindent
\textit{I/O relationship.~} We use the following algorithm, relying on \ALCHdd
instance checking.
\begin{lstlisting}[frame=none,mathescape=true]
boolean IORelationship(DKB $\adkb$, Table $\dt \in \get{\aidkb}{\odts}$, Object $\cv{o} \in \univ$,
                       Attribute $\attr{b} \in \get{\dt}{\tout}$, Value  $\cv{v} \in \get{\dt}{\atype}(\attr{b})$) {
    return $\tup{\cv{o},\cv{v}}$ instance of $\name{\dt}{\attr{b}}$ w.r.t. $\tdlb{\get{\adkb}{\cbridge}}(\adkb)$;
}
\end{lstlisting}

\noindent
\textit{Output coverage.~}  We use the following algorithm, relying on \ALCHdd
satisfiability checking.
\begin{lstlisting}[frame=none,mathescape=true]
boolean CoversOutput(IDKB $\aidkb$, Table $\dt \in \get{\aidkb}{\odts}$,
                     Attribute $\attr{b} \in \get{\dt}{\tout}$, Value  $\cv{v} \in \get{\dt}{\atype}(\attr{b})$) {
    return $\SOMET{\name{\dt}{\attr{b}}[=_\cv{v}]}$ is satisfiable w.r.t. $\tdlb{\get{\aidkb}{\cbridge}}(\aidkb)$;
}
\end{lstlisting}

\smallskip
\noindent\textit{Completeness.} We use the following algorithm, relying on \ALCHdd satisfiability checking.

\begin{lstlisting}[frame=none,mathescape=true]
boolean Complete(IDKB $\aidkb$) {
  for each  $\dt \in \get{\get{\aidkb}{\adrg}}{\odts}$ {
    for each $\attr{b} \in \get{\dt}{\tout}$ {
      if $\bigsqcap_{P \in \get{\get{\aidkb}{\adrg}}{\tin}} \SOMET{P} \sqcap \neg \SOMET{\name{\dt}{b}}$ is satisfiable w.r.t. $\tdlb{\get{\aidkb}{\cbridge}}(\aidkb)$
        return false;
    }
  }
  return true;
}
\end{lstlisting}

\smallskip
\noindent\textit{Output determinability.} We use the following algorithm, relying on \ALCHdd satisfiability checking. Obviously, we consider templates  described by \ALCHdd concepts.

\begin{lstlisting}[frame=none,mathescape=true]
boolean DeterminesOutput(IDKB $\aidkb$, $\ALCHdd$ Concept $\Phi$) {
  for each  $\dt \in \get{\get{\aidkb}{\adrg}}{\odts}$ {
    for each $\attr{b} \in \get{\dt}{\tout}$ {
      if $\Phi \sqcap \neg \SOMET{\name{\dt}{b}}$ is satisfiable w.r.t. $\tdlb{\get{\aidkb}{\cbridge}}(\aidkb)$
        return false;
    }
  }
  return true;
}
\end{lstlisting}
It is straightforward to check that all the presented algorithms correctly reconstruct the corresponding \fold decision problems.
\end{proof}

\begin{example}
  As discussed in Example~\ref{ex:ship-ontology}, the ship ontology in
  \tablename~\ref{tab:shiptypes} can be formalized in \ALCHdd. Hence, the
  maritime security \dkb of Example~\ref{ex:ship-dkb} is actually an \ALCHdd
  \dkb. Thanks to Theorem~\ref{ex:dmn}, standard \ALCHdd reasoning tasks can
  then be used to carry out all the introduced reasoning tasks over such a
  \dkb.
\end{example}

Thanks to Theorems~\ref{thm:alcdd} and~\ref{thm:reasoning}, we obtain two
additional key results. The first result characterizes the complexity of $\dkb$
reasoning in the case of \ALCHdd \dkb{s}.

\begin{corollary}
  \label{cor:complexity}
  All \dkb reasoning tasks over \ALCHdd \dkb{s} can be decided in \exptime.
\end{corollary}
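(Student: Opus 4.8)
The plan is to simply chain together Theorem~\ref{thm:reasoning} and Theorem~\ref{thm:alcdd}, after checking that the sizes and the datatype hypothesis line up. First I would observe that, by Theorem~\ref{thm:reasoning} and the algorithms in its proof, every \dkb reasoning task over an \ALCHdd \dkb $\adkb$ reduces to a number of \ALCHdd concept (un)satisfiability or instance-checking tests that is polynomial in the size of $\adkb$ (one per pair of rules, per output attribute, etc.), each test being posed w.r.t.\ the \ALCHdd KB $\tdlb{\get{\adkb}{\cbridge}}(\adkb)$. For output determinability one also uses that templates are, by assumption, \ALCHdd concepts, so these tests stay within \ALCHdd as well.

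Next I would argue that $\tdlb{\get{\adkb}{\cbridge}}(\adkb)$ has size polynomial in the size of $\adkb$: the encoding in Section~\ref{sec:translartion-alc} adds one feature per attribute/input datum, one typing and one facet axiom per attribute, one concept inclusion per information flow, and one inclusion per rule; the only non-linear blow-up is the ``prioritization'' conjunction in the rule encoding, which makes the rule axioms at most quadratic in the number of rules. The concepts and datatype expressions appearing in these axioms are of size linear in the corresponding \sfeel conditions and output ranges. Hence $\card{\tdlb{\get{\adkb}{\cbridge}}(\adkb)}$ is polynomial in $\card{\adkb}$, and the same holds for each individual concept whose satisfiability is being tested.

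I would then verify the datatype hypothesis of Theorem~\ref{thm:alcdd} for the core S-FEEL datatypes $\types$: a $\type$-satisfiability test amounts to checking satisfiability of a finite conjunction of facet predicates $\odot_{\cv{v}}$ (with $\odot\in\{=,<,\leq,>,\geq\}$) over a numeric domain, i.e.\ of a simple system of interval/equality constraints, which is decidable in polynomial (indeed linear) time; for the string datatype with equality it is trivial. Thus for all $\type\in\types$, $\type$-satisfiability is decidable in \exptime, and Theorem~\ref{thm:alcdd} applies, giving that concept satisfiability w.r.t.\ a TBox, KB satisfiability, and instance checking over \ALCHdd KBs are all decidable in \exptime.

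Putting these together: each of the polynomially many tests runs in time exponential in the size of a KB that is polynomial in $\card{\adkb}$, hence in time exponential in $\card{\adkb}$; a polynomial number of such runs is still exponential. Therefore every \dkb reasoning task over \ALCHdd \dkb{s} is decidable in \exptime, which is the claim. The only steps requiring any care are the polynomial-size bound on the encoding (straightforward from the syntactic definition) and the \exptime bound on $\type$-satisfiability for the S-FEEL datatypes (immediate, since these are just linear constraint systems); I do not expect either to be a genuine obstacle, and no matching lower bound is needed here since \exptime-hardness already follows from Theorem~\ref{thm:alcdd} for the underlying DL reasoning.
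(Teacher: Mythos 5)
Your proposal is correct and follows exactly the route the paper intends: the corollary is stated as an immediate consequence of Theorem~\ref{thm:reasoning} (polynomially many \ALCHdd satisfiability/instance-checking tests over the polynomial-size encoding $\tdlb{\get{\adkb}{\cbridge}}(\adkb)$) combined with Theorem~\ref{thm:alcdd} (each such test is in \exptime, given that $\type$-satisfiability for the S-FEEL datatypes is easily within \exptime). The paper gives no further argument, so your filling in of the size bounds and the datatype hypothesis is exactly the right diligence, not a deviation.
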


Notice that the complexity of reasoning in \dkb{s} that employ different
ontology languages to capture the background knowledge, depends on the actual
ontology language of choice, considering that the encoding of \drgs brings an
\ALCHdd component. In general, since \ALCHdd datatypes come with unary
predicates, our approach naturally lends itself to be combined with OWL\,2
ontologies, and the \ALCHdd encoding of \drgs can in fact be directly
represented in OWL\,2.

\begin{corollary}
  \label{cor:concrete}
  All \dkb reasoning tasks over \ALCHdd \dkb{s} can be tackled by standard
  OWL\,2 reasoners.
\end{corollary}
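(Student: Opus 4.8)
The plan is to derive Corollary~\ref{cor:concrete} by chaining the reduction of Theorem~\ref{thm:reasoning} with a purely syntactic embedding of \ALCHdd into OWL\,2. First I would recall that Theorem~\ref{thm:reasoning} already rephrases every \dkb reasoning task over an \ALCHdd \dkb (or \idkb) as a polynomial number of \ALCHdd concept (un)satisfiability tests with respect to an \ALCHdd KB, together with a number of \ALCHdd instance checking tests, all carried out on the KB $\tdlb{\cbridge}(\adkb)$ that juxtaposes the background knowledge with the \ALCHdd encoding of the \drg. Since concept satisfiability w.r.t.\ a KB is moreover linearly interreducible with KB (un)satisfiability --- exactly the reduction spelled out at the beginning of the proof of Theorem~\ref{thm:alcdd} --- it suffices to argue that the finitely many \ALCHdd KBs arising in these tests can be consumed by an off-the-shelf OWL\,2 reasoner.

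The core step is to exhibit the (entirely routine) translation of an \ALCHdd KB into an equivalent OWL\,2 ontology. On the concept side this is the standard correspondence: $\top$ and $\bot$ go to \texttt{owl:Thing} and \texttt{owl:Nothing}, the Boolean constructs to \texttt{ObjectComplementOf}, \texttt{ObjectIntersectionOf} and \texttt{ObjectUnionOf}, $\SOME{R}{C}$ and $\ALL{R}{C}$ to \texttt{ObjectSomeValuesFrom} and \texttt{ObjectAllValuesFrom}, $\SOME{F}{\dtype}$ to \texttt{DataSomeValuesFrom}, and $\UNDEF{F}$ to the \texttt{ObjectComplementOf} of \texttt{DataSomeValuesFrom}$(F,\type_F)$. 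Concept, role and feature inclusions become \texttt{SubClassOf}, \texttt{SubObjectPropertyOf} and \texttt{SubDataPropertyOf} axioms, role and feature disjointness become \texttt{DisjointObjectProperties} and \texttt{DisjointDataProperties}, and the functionality that \ALCHdd assumes for features is made explicit with a \texttt{FunctionalDataProperty} axiom per feature; ABox facts map to \texttt{ClassAssertion}, \texttt{ObjectPropertyAssertion} and \texttt{DataPropertyAssertion}. On the datatype side, the primitive datatypes in \types are standard OWL\,2 datatypes, a facet $\odot_{\cv{v}}$ is a constraining facet inside a \texttt{DatatypeRestriction}, the operators $\cup$, $\cap$ and $\setminus$ are \texttt{DataUnionOf}, \texttt{DataIntersectionOf}, and \texttt{DataIntersectionOf} with \texttt{DataComplementOf}, and a value enumeration $\{\cv{v}_1,\dots,\cv{v}_m\}$ is \texttt{DataOneOf}. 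I would then observe that nothing in \ALCHdd escapes OWL\,2: in particular, since \ALCHdd uses only \emph{unary} datatype predicates there are no $n$-ary concrete-domain atoms that OWL\,2 cannot express. The claim then follows, because KB satisfiability, concept satisfiability w.r.t.\ a KB, and instance checking are precisely the reasoning services that standard OWL\,2 reasoners provide \cite{TsHo06,SiPa06,ShMH08,W3Crec-OWL2-Overview}.

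I expect the only delicate part to be the datatype bookkeeping: one must verify that \emph{every} datatype expression that can occur in $\tdlb{\cbridge}(\adkb)$ and in the auxiliary KBs built by the reductions stays within the OWL\,2 datatype map. The non-obvious contributors are the difference $\type_F\setminus\dtype$ produced when concepts are normalised to negation-normal form, and the singleton datatypes $\{\cv{v}\}$ introduced when feature ABox assertions are internalised (both appear in the preprocessing used in the proof of Theorem~\ref{thm:alcdd}); one should also confirm that the value spaces of the OWL\,2 datatypes chosen for strings, naturals, integers, rationals and reals genuinely coincide with those of the datatypes in \types (using, e.g., \texttt{xsd:string}, \texttt{xsd:integer}, \texttt{xsd:decimal}, \texttt{owl:rational}, \texttt{owl:real}). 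Since all of these are already covered by the constructs listed above, this check is immediate and no real obstacle remains.
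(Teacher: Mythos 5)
Your proposal is correct and follows essentially the same route as the paper, which presents this corollary as an immediate consequence of Theorem~\ref{thm:reasoning} together with the observation that \ALCHdd is a sublanguage of OWL\,2, so that the KB $\tdlb{\cbridge}(\adkb)$ and the finitely many satisfiability/instance checks can be handed directly to an OWL\,2 reasoner. You merely spell out the (routine) syntactic embedding and the datatype bookkeeping in more detail than the paper does, which is fine but adds no new idea.
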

This is an important observation, since one can then resort to state-of-the-art
reasoners for OWL\,2 that have been developed and optimized over the years
\cite{TsHo06,SiPa06,ShMH08}.
%
%
When considering reasoning tasks that only focus on intensional knowledge, that
is, all reasoning tasks introduced in Section~\ref{sec:tasks} with the
exception of I/O relationship, it is also possible to rely on reasoners for
OWL\,2 TBoxes that \emph{do not support datatypes}. In fact, we can reconstruct
the  technique introduced by \citeN[Theorem~2.14]{Lutz02c} to encode away unary
concrete domains, so as to compile away datatypes from IDKBs, finally obtaining
a pure \ALCH TBox. However, this requires to exhaustively apply datatype
reasoning during the compilation process, Hence, it remains open whether this
introduces an effective improvement over full OWL\,2 reasoners, which typically
apply datatype reasoning lazily, only when needed.

A second open problem is to show whether \drgs can be
encoded in weaker ontology languages, so as to obtain more refined complexity
bounds for the different \dkb reasoning tasks. The main difficulty here stems
from the fact that extensions of lightweight DLs with datatypes have been so
far much less investigated than their corresponding expressive counterparts. In
particular, currently known lightweight DLs with datatypes are
equipped with an ontology language that is too weak to encode \drgs
\cite{ArKR12,SaCa12}.


\section{Related Work}
\label{sec:related}

To the best of our knowledge, this work is the first approach that combines
\dmn \drgs with background knowledge, building on the preliminary results
obtained in \cite{CDMM17} for the case of single decision tables. In addition,
it is also the first approach that considers reasoning tasks over \dmn \drgs,
even without considering the contribution of background knowledge. In this
light, it can be considered as a natural extension of the formalization effort
carried out in \cite{CDL16}.

Reasoning on single decision tables has instead attracted a lot of interest in
the literature, an interest recently revived by the introduction of the \dmn
standard. In particular, reasoning tasks that aim at assessing completeness,
consistency and redundancy of decision tables, are widely
recognized~\cite{CodasylReport}. The literature flourishes of ad-hoc,
algorithmic techniques to account for such decision problems, considering
specific datatypes. In particular, one long-standing line of research
comprises techniques to reason about decision tables whose attributes are
either boolean or enumerative (i.e., categorical)
\cite{Pawlak1987,HoCh95,ZaLe97}. Some of these techniques have been
actually implemented inside well-known tools like
Prologa~\cite{Vanthienen1994,Vanthienen1998}.

The main drawback of these approaches is that they do not directly account for
numerical datatypes: in the presence of conditions expressing numerical
intervals, they require to restructure their corresponding rules so as to
ensure that all intervals are disjoint. \cite{CDL16} introduces ad-hoc
algorithmic techniques based on a geometric interpretation of rules, and shows
that such techniques outperform previous approaches, while being able to
naturally handle numerical domains. The DMN component of Signavio\footnote{\url{https://www.signavio.com/}} detects
overlapping and missing rules by natively dealing with numerical data
types. However, the actual algorithms used to conduct such checks have not been
disclosed. OpenRules\footnote{\url{http://openrules.com/}} builds instead on constraint satisfaction techniques to
analyze rules containing numerical attributes.

Differently from all these approaches, we consider here full \dmn \drgs in the
presence of background knowledge. This richer setting also demands a wider and
more sophisticated set of reasoning tasks, going beyond completeness and
consistency of single decision tables. For such advanced reasoning tasks, we do
not develop ad-hoc algorithmic techniques, but instead rely on a fully
automated encoding of the input specification, and of the tasks of interest,
into standard reasoning tasks for the DL \ALCHdd.  Efficient, state-of-the-art
reasoners have been devised for expressive DLs, such as OWL~2
\cite{HoKS06,W3Crec-OWL2-Overview}, that fully capture \ALCHdd
\cite{TsHo06,SiPa06,ShMH08}, setting the baseline for a future experimental
evaluation of the techniques presented in this paper, considering real and
synthetic data. In addition, by inspecting the proof of
Theorem~\ref{thm:reasoning}, it is easy to see that all the presented
algorithms can be easily modified so as to return the actual, involved rules
whenever a property is not satisfied.


From the knowledge representation point of view, this work touches the
widely studied, and still debated, problem of integrating rules and
ontologies. This problem has been approached in different ways, depending on
the expressiveness of the rule and of the ontology languages
\cite{DEIK09,KrMH11}. On the one hand, several proposals have been devised to integrate rules and ontologies by defining suitable ``hybrid'' semantics
\cite{MoRo10}, or by considering rules accessing ontologies as an external
knowledge component \cite{EKRS17}. On the other hand, ``controlled'' forms of
rules have been integrated with ontologies by reformulating them as additional
ontological axioms \cite{KrMH11}. Our contribution belongs to the latter family, thanks to the interesting trade-off between
expressiveness and simplicity offered by the \dmn \sfeel language.


\section{Conclusions}
\label{sec:conclusions}

In this work, we have provided a threefold contribution to the area of decision
management, recently revived by the introduction of the DMN OMG standard.
First, we have introduced \emph{decision knowledge bases} (\dkb{s}) as a
conceptual framework to integrate DMN complex decisions with background
knowledge, expressed as a description logic (DL) knowledge base.  On top of this
conceptual framework, we have then introduced key reasoning tasks to ascertain
the correctness of a \dkb.  Second, we have provided a logic-based formalization
of \dkbs and their corresponding reasoning tasks, using multi-sorted FOL
equipped with datatypes.  Third, we have focused our attention on the
interesting case where the background knowledge is expressed using \ALCHdd, an
extension of the well-known description logic \ALC with multiple datatypes, and
a sublanguage of the standard ontology language OWL\,2.  In this
setting, we have shown that all the aforementioned reasoning tasks are
decidable in \exptime, and lend themselves to be carried out using standard
reasoners for expressive description logics. On the way of proving this result,
we have shown that TBox and ABox reasoning for \ALCH extended with multiple
datatypes stays within \exptime, which is of independent interest.

These three contributions pave the ways towards a concrete implementation of
the presented framework and techniques. We plan to realize this implementation
and to consequently carry out an experimental evaluation by considering not
only full \dkbs, but also \dkbs consisting of a single decision table, as well
as complex decisions without background knowledge, so as to better identify the
sources of complexity, and to see how well a general approach of this form
compares with the ad-hoc algorithms developed in the literature. In spite of
the \exptime upper bound for reasoning on \dkbs, we believe that an effective,
scalable implementation is actually at reach, thanks to the availability of
solid, optimized reasoners for OWL\,2.

In addition to the implementation effort, we are interested in refining our
complexity analysis, in particular aiming at tighter bounds on the complexity
caused by the decision component.  Specifically, we plan to systematically
study how lightweight DLs equipped with datatypes \cite{SaCa12,ArKR12}, for
which currently the ontology language is too weak to capture complex DMN
decision tables, can be extended, focusing on their ability of dealing with
datatypes and features.  We would like to single out more precisely the
complexity brought in by a DMN complex decision table, with the aim of
capturing more complex forms of tables, without compromising the low
computational complexity of reasoning in lightweight DLs (\ACz in the size of
the data).  As a consequence, this would pave the way towards lightweight
\dkb{s}.


\paragraph{\textbf{Acknowledgements.}~}
This research is partly supported by the Estonian Research Council Grant
IUT20-55,
by the project ``Reasoning and Enactment for Knowledge-Aware Processes''
(REKAP), which is funded through the 2017 call issued by the Research Committee
of the Free University of Bozen-Bolzano, and
by the Euregio Interregional Project Network IPN12 ``Knowledge-Aware
Operational Support'' (KAOS), which is funded by the ``European Region
Tyrol-South Tyrol-Trentino'' (EGTC) under the first call for basic research
projects and by the Free University of Bozen-Bolzano.

\bibliographystyle{acmtrans}
\bibliography{main-bib}


\label{lastpage}
\end{document}